\newtheorem{theorem}{Theorem}
\newtheorem{claim}{Claim}
\newtheorem{lemma}{Lemma}
\newcommand{\relu}[1]{\left[ #1 \right]_+}
\newcommand{\ceil}[1]{\left\lceil #1 \right\rceil}
\newcommand{\set}[1]{\left\lbrace#1\right\rbrace}
\newcommand{\p}[1]{\left( #1 \right)}
\newcommand{\pcc}[1]{\left[ #1 \right]}
\newcommand{\pco}[1]{\left[ #1 \right)}
\newcommand{\reals}{\mathbb{R}}
\newcommand{\E}{\mathbb{E}}
\newcommand{\abs}[1]{\left| #1 \right|}
\newcommand{\poly}{\text{poly}}
\newcommand{\ba}{\mathbf{a}}
\newcommand{\bx}{\mathbf{x}}
\newcommand{\bw}{\mathbf{w}}
\newcommand{\bb}{\mathbf{b}}
\newcommand{\bz}{\mathbf{z}}
\newcommand{\by}{\mathbf{y}}
\newcommand{\bp}{\mathbf{p}}
\newcommand{\bzero}{\mathbf{0}}
\newcommand{\balpha}{\boldsymbol{\alpha}}
\newcommand{\Ocal}{\mathcal{O}}
\newcommand{\Fcal}{\mathcal{F}}
\newcommand{\Ncal}{\mathcal{N}}
\newcommand{\norm}[1]{\left|\left|#1\right|\right|}
\newcommand{\inner}[1]{\langle#1\rangle}
\newcommand{\pr}{\mathbb{P}}
\newcommand\pFq[2]{{}_{#1}F_{#2}}
\newtheorem{assumption}{Assumption}
\newcommand{\secref}[1]{Sec.~\ref{#1}}
\newcommand{\figref}[1]{Fig.~\ref{#1}}
\renewcommand{\eqref}[1]{Eq.~(\ref{#1})}
\newcommand{\lemref}[1]{Lemma~\ref{#1}}
\newcommand{\thmref}[1]{Thm.~\ref{#1}}
\newcommand{\unitball}{B_d}
\title{Depth Separations in Neural Networks:\\What is Actually Being 
Separated?\footnote{Accepted for presentation at the Conference on Learning Theory (COLT) 2019}}
\author{Itay Safran\qquad Ronen Eldan\qquad Ohad Shamir\\
Weizmann Institute of 
Science\\\texttt{\{itay.safran,ronen.eldan,ohad.shamir\}.weizmann.ac.il}}
\date{}
\begin{document}

	\maketitle
	\begin{abstract}
		Existing depth separation results for constant-depth networks 
		essentially show that certain radial functions in $\mathbb{R}^d$, which can be easily approximated with depth $3$ networks, cannot be approximated by depth $2$ networks, even up to constant accuracy, unless their size is exponential in $d$. However, the functions used to demonstrate this are 
		rapidly oscillating, with a Lipschitz parameter scaling polynomially with the dimension $d$ (or equivalently, by scaling the function, the hardness result applies to $\mathcal{O}(1)$-Lipschitz functions only when the target accuracy $\epsilon$ is at most $\text{poly}(1/d)$). In this paper, we 
		study whether such depth separations might still hold in the natural setting of $\mathcal{O}(1)$-Lipschitz radial functions, when $\epsilon$ does not scale with $d$. Perhaps surprisingly, we show that the answer is negative: In contrast to the intuition suggested by previous work, it \emph{is} possible to approximate $\mathcal{O}(1)$-Lipschitz radial functions with depth $2$, size $\text{poly}(d)$ networks, for every constant $\epsilon$. We complement it by showing that approximating such functions is also possible with depth $2$, size $\text{poly}(1/\epsilon)$ networks, for every constant $d$. Finally, we show that it is not possible to have polynomial dependence in both $d,1/\epsilon$ simultaneously. 
		Overall, our results indicate that in order to show depth separations for expressing $\mathcal{O}(1)$-Lipschitz functions with constant accuracy -- if at all possible -- one would need fundamentally different techniques than existing ones in the literature.
	\end{abstract}
	
	\section{Introduction}
	In the past few years, quite a few theoretical works have explored the 
	beneficial effect of depth on increasing the expressiveness of neural 
	networks (e.g., 
	\citet{delalleau2011shallow,martens2013representational,martens2014expressive,montufar2014number,cohen2015expressive,
	telgarsky2016benefits,eldan2016power,liang2016deep,poggio2016and,poole2016exponential,
	shaham2016provable,yarotsky2016error,daniely2017depth,safran2017depth}).
	These works mostly focus on depth separations: namely showing 
	that there are functions which can be expressed by a small network of a 
	given depth, but cannot be approximated by shallower networks, even if 
	their size is much larger. Perhaps the clearest manifestation of this is in 
	separating depth $2$ and depth $3$ networks: There are functions $f$ and 
	distributions $\mu$ on  
	$\reals^d$, which are
	\begin{itemize}
	\item Hard to approximate with a depth $2$ network: $\E_{\bx\sim\mu} 
	\left[\left(N_2(\bx)-f(\bx)\right)^2\right]\geq c$
	for some absolute $c>0$, using any depth $2$, width $\poly(d)$ network 
	$N_2(\bx):=\sum_{i=1}^{\poly(d)}u_i 
	\sigma(\bw_i^\top\bx+b_i)$ (for some parameters $\{v_i,\bw_i,b_i\}$ and 
	univariate activation function $\sigma$).
	\item Easy to approximate with a depth $3$ network: For any $\epsilon>0$, it 
	holds that $\E_{\bx\sim\mu} [(N_3(\bx)-f(\bx))^2]\leq 
	\epsilon$ (or sometimes even $\sup_{\bx}|N_3(\bx)-f(\bx)|\leq\epsilon$) for 
	some depth $3$, width $\poly(d,1/\epsilon)$ neural network 
	network $N_3(\bx):=\sum_{i=1}^{\poly(d,1/\epsilon)}u_i 
	\sigma\left(N_2^i(\bx)+b_i\right)$ (where each $N_2^i$ is a depth $2$, width $\poly(d,1/\epsilon)$ network, and $\sigma$ is a standard activation such as a ReLU). 
	\end{itemize}
	\citet{eldan2016power} (as well as a related construction in 
	\citet{safran2017depth}) prove such a lower bound unconditionally, 
	whereas \citet{daniely2017depth} show this with a simple proof, assuming 
	that the parameters of the network cannot be too large. Moreover, 
	these ``hard'' functions have a simple form: They are essentially radial 
	functions\footnote{\citet{eldan2016power} use a radial function. 
		\citet{daniely2017depth} use a function which is easily reduced to a radial 
		one -- see next paragraph.} of the form 
	$f(\bx)=g(\norm{\bx})$ for a univariate function 
	$g$. Such radial functions are of interest in learning theory, since there 
	are function classes that are essentially a mixture of radial functions 
	(e.g.\ Gaussian kernels), and they are essential primitives in expressing 
	functions which involve Euclidean distances. The intuition for the 
	above separations is that radial functions can be easily approximated with 
	depth $3$ networks, by first approximating the $\bx\mapsto 
	\norm{\bx}^2=\sum_i x_i^2$ function in the first layer, and then 
	approximating the univariate function $g(\sqrt{\cdot~})$ in the next layers. In contrast, 
	approximating high-dimensional radial functions with depth $2$ networks 
	appears to be difficult, since they are, in a sense, the furthest away from 
	functions which depend on only a single direction  
	(see \figref{fig:radialhigh}). 
	Overall, these results appear to provide a clear separation between the 
	required widths of depth $2$ and depth $3$ networks, in terms of the 
	dimension $d$.
	
	\begin{figure}
	\includegraphics[trim=2cm 0cm 2cm 0cm, clip=true, scale=0.85]{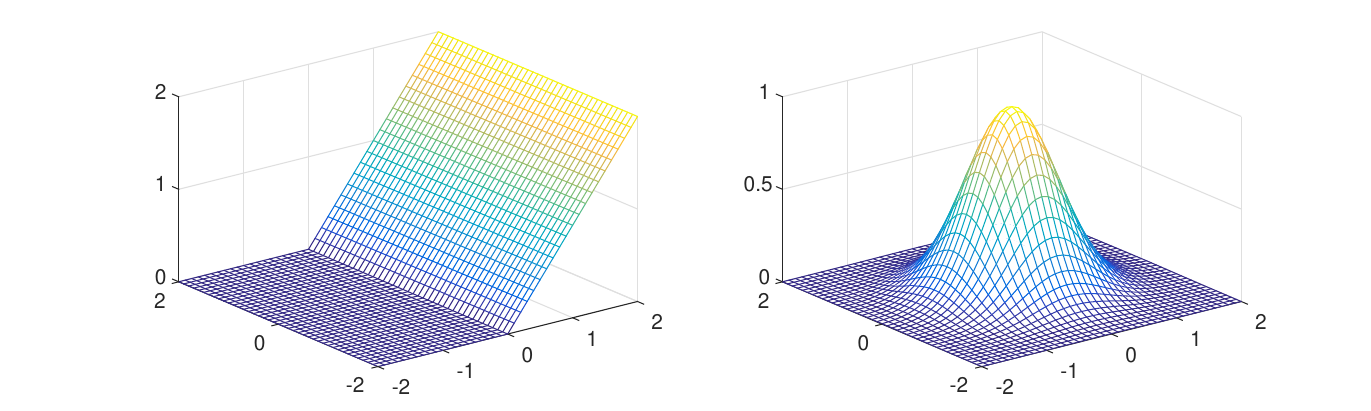}
	\caption{Left figure exemplifies the function $\bx\mapsto 
	\sigma(\inner{\bw,\bx})$ for $\sigma(z)=\max\{0,z\}$. Right figure 
	exemplifies a radial function. Intuitively, depth $2$ networks are linear 
	combinations of functions which vary only in one direction, whereas radial 
	functions can vary in all directions. Thus, in $d$ dimensions, it seems one 
	would need exponentially large (in $d$) depth $2$ networks to approximate 
	some radial functions well. To give an indirect analogy, it is known 
	that in order to uniformly approximate a $d$-dimensional unit sphere with a 
	polytope, one generally needs exponentially (in $d$) many facets 
	\citep{kochol2004note}.}\label{fig:radialhigh}
	\end{figure}
	
	However, a closer inspection of the constructions above	reveals that in 
	fact, this is not so clear. The reason is that the functions which are 
	shown to be provably hard for depth $2$ networks are rapidly oscillating, and 
	require a Lipschitz constant 
	(at least) polynomial in $d$ to even approximate: In 
	\citet{eldan2016power}, the function has the form 
	$\bx\mapsto \sum_{i=1}^{\Theta(d^2)}\epsilon_i \mathbbm{1}(\norm{\bx}\in 
	[a_i,b_i])$, over a distribution supported on $\reals^d$ (where 
	$\epsilon_i\in \{-1,+1\}$, $\mathbbm{1}$ is the indicator function, 
	and $[a_i,b_i]$ are disjoint 
	intervals in the range $\Theta(\sqrt{d})$). In \citet{daniely2017depth}, 
	the function used is easily reduced to $\sin(2\pi d^3\norm{\bx}^2)$ (see 
	proof of \thmref{thm:daniely_reduction}).
	Having such rapidly oscillating functions is not 
	always a natural regime, since we are often interested in functions whose 
	Lipschitz parameter is independent of the dimension. For 
	example, in learning theory, this is actually needed to obtain 
	dimension-free learnability results for convex functions \citep[Chapter 
	12]{shalev2014understanding}. Moreover, there is evidence that functions 
	which oscillate too rapidly can be computationally difficult for neural 
	networks to learn with standard gradient-based methods 
	(e.g., 
	\citep{song2017complexity,shalev2017failures,shamir2018distribution,abbe2018provable}),
	so Lipschitz functions are arguably more interesting from a learning 
	perspective. Overall, we are lead to the following natural 
	question: 
	\begin{quote}
	\emph{Can we show a depth $2$ vs.\ $3$ separation result in terms of the 
	dimension $d$, even for approximating $\Ocal(1)$-Lipschitz functions?}
	\end{quote}
	
	In other words, are there $\Ocal(1)$-Lipschitz functions which cannot be 
		approximated by depth $2$, width-$\poly(d)$	networks, but can be 
		approximated by depth $3$, width-$\poly(d)$ networks?
		
	To study this, we first notice that it is easy to reduce any hardness 
	result for approximating $L$-Lipschitz functions to accuracy $\epsilon$, to 
	hardness of approximating $1$-Lipschitz functions to accuracy 
	$\epsilon/L$, simply by scaling the functions by $1/L$. Moreover, we can 
	even reduce the hardness result to a $1$-Lipschitz function with accuracy 
	$\epsilon$, by dilating the measure we are using by a factor $L$ (see 
	Appendix \ref{app:reductions} for a formal statement). However, now the 
	lower 
	bounds require either the accuracy $\epsilon$ or the diameter 
	of the support of the distribution to scale polynomially with $d$. As a 
	result, when saying that the depth $2$ networks require width 
	super-polynomial in $d$, it is not clear whether the hardness really comes 
	from the dimension $d$, or perhaps from other parameters which 
	are being forced to scale with it, such as the 
	accuracy $\epsilon$. Thus, we rephrase our question as follows: 
	
	\begin{quote}
	\emph{Can we show 
	a depth $2$ vs.\ depth $3$ neural network separation result in terms of the 
	dimension $d$, for approximating $\Ocal(1)$-Lipschitz functions up to 
	constant accuracy $\epsilon$ on a 
	domain of bounded radius (all independent of $d$)?}
	\end{quote}
	
	The intuition described earlier (on the difficulty of approximating radial 
	functions in high dimensions) seems to suggest that the answer is 
	positive.
	
	\textbf{Our Results.} In this paper, we show that perhaps surprisingly, the 
	answer to the question above is actually negative (at least for radial 
	functions): For any constant $\epsilon$, it \emph{is} 
	possible to approximate radial functions using $\poly(d)$-width, depth $2$ 
	networks. More precisely, our upper bound on the required size is 
	$\exp\left(\Ocal\left(\epsilon^{-9}\log(d/\epsilon)\right)\right)$ (see 
		\thmref{thm:main}).
	We also complement this by showing that for constant dimension $d$, 
	approximation of any $\Ocal(1)$-Lipschitz radial function is possible with 
	$\poly(1/\epsilon)$-width, depth $2$ networks: Specifically, the bound is 
	$\exp\left(\Ocal\left(d\log(1/\epsilon)\right)\right)$ (see 
	\thmref{thm:poly_eps}). Both bounds are $L_{\infty}$-type 
	approximation results, with respect to the unit ball: Namely, given a 
	function $f$, we show how to find a neural network $n(\cdot)$ such that
	\[
	\sup_{\bx\in\unitball}|n(\bx)-f(\bx)|\leq\epsilon~,
	\]
	where $\unitball:=\{\bx\in\reals^d:\norm{\bx}\leq 1\}$. This is a stronger 
	approximation guarantee than $L_2$-type approximation guarantees (where we 
	bound
	$\E_{\bx\sim\mu}[(n(\bx)-f(\bx))^2]$ for some 
	distribution $\mu$ on $\unitball$), since a bound on the former implies a 
	bound on the latter. Furthermore, we show that any even radial monomial, namely a radial function of the form $ \bx\mapsto\norm{\bx}^{2k} $, for any fixed natural $ k $, can be approximated to accuracy $ \epsilon $ using a depth $2$ network of width polynomial in both $ d $ and $ 1/\epsilon $. Finally, we formally prove (using a reduction from 
	\citet{eldan2016power,daniely2017depth}, and using their 
	assumptions) that it is impossible to obtain a general polynomial dependence on 
	both $d$ and $1/\epsilon$ in our setting (see 	
	\thmref{thm:daniely_reduction} and \thmref{thm:unit_ball_reduction}). 
	Overall, these results show that to	approximate radial functions with 
	depth $2$ networks, their width \emph{can} be polynomial in either $d$ or $1/\epsilon$, but generally not in both. Putting this in the context of known depth separations, our results indicate that the difficulty in approximating the “hard” functions used in separating depth 2 from depth 3 stems from both the input dimension $d$ and the accuracy parameter $\epsilon$ simultaneously, and not from either one alone.
	
	It is interesting to note that such trade-offs between dimension and 
	accuracy also appear in very different areas of learning theory. For 
	example, consider the classic problem of agnostically learning halfspaces 
	up to excess error $\epsilon$ in $d$ dimensions (see for example 
	\citet{kalai2008agnostically}): It is folklore that for well-behaved input 
	distributions, one can learn a halfspace in runtime $\poly(1/\epsilon)$ for 
	constant $d$ (simply by creating an $\epsilon$-net 
	of all possible halfspaces, and picking the best one on a training data). 
	On the other hand, it is also known that one can learn in runtime 
	$\poly(d)$ for any constant $\epsilon$, at least for certain input 
	distributions \citep{kalai2008agnostically}. However, there is 
	evidence that being polynomial over both $d,1/\epsilon$ is not 
	possible in those settings \citep{klivans2014embedding}. 
	
	Finally, we emphasize that our results still do not 
	fully settle the question stated above, since there might be depth 
	separation results using functions which cannot be reduced to radial ones. 
	However, such results do not exist at the present time, and we believe that 
	our observations may also be relevant for more general families of 
	functions. In any case, we hope our paper would motivate and guide further 
	study of this question.

	\section{Main Results}
	
	In this section, we present our main results and the high-level proof 
	components. The remainder of the proofs are provided in 
	\secref{sec:proofs}. 
	
	\subsection{Approximation with Width $\poly(d)$ Networks}
	
	We first present our formal result, implying that radial functions can be 
	approximated with depth $2$, width $\poly(d)$ networks, to any constant 
	accuracy $\epsilon$. We prove this result for networks employing any 
	activation function $\sigma(\cdot)$ which satisfies the following mild 
	assumption (taken from \citet{eldan2016power}), which implies that the 
	activation can be used to approximate univariate functions well. This 
	assumption is satisfied for all standard activations, such as ReLU and 
	sigmoidal functions (see reference above for further discussion): 
	
	\begin{assumption}\label{asm:assumption1}
		Given the activation function $ \sigma $, there is a constant $ 
		c_{\sigma}\ge1 $ (depending only on $ \sigma $) such that
		the following holds: For any $ L $-Lipschitz function $ f : \reals\to\reals $ which is constant outside a bounded interval $ [-R,R] $, and for any $ \delta $, there exist scalars $ a,\set{\alpha_i,\beta_i,\gamma_i}_{i=1}^w $, where $ w\le c_{\sigma}\frac{RL}{\delta} $, such that the function
		\[
		h(x) = a +\sum_{i=1}^{w} \alpha_i\sigma(\beta_ix-\gamma_i)
		\]
		satisfies
		\[
		\sup_{x\in\reals} \abs{f(x) - h(x)} \le\delta.
		\]
	\end{assumption}
	Our main result for this subsection is the following:
	\begin{theorem}\label{thm:main}
		Suppose $ \sigma:\reals\to\reals $ satisfies Assumption 
		\ref{asm:assumption1}. Then for any $ \epsilon>0 $ and any $ 1 
		$-Lipschitz radial function $ f(\bx)=\varphi(\norm{\bx}) $, there 
		exists a depth $2$ neural network with $ \sigma $ activations and width $ w\le \exp\p{\Ocal\p{\epsilon^{-9}\log(d/\epsilon)}} $ satisfying
		\begin{equation*}
			\sup_{\bx\in 
			\unitball}\abs{\sum_{i=1}^{w}v_i\sigma\p{\bw_i^{\top}\bx+b_i}+b_0-f(\bx)}
			\le \epsilon,
		\end{equation*}
		where the big O notation hides a constant that depends solely on $ \sigma $.
	\end{theorem}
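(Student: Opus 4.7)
The plan is to approximate the $1$-Lipschitz radial function $f(\bx)=\varphi(\norm{\bx})$ first by a low-degree polynomial in $\norm{\bx}^2$, and then to express this polynomial as a sum of univariate ``ridge'' functions $\bx\mapsto g_i(\bw_i^{\top}\bx)$ that a depth-$2$ $\sigma$-network can implement via Assumption~\ref{asm:assumption1}.

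The first step would be polynomial approximation of $\varphi$ in the squared variable $s=t^2$. Since $\varphi$ is $1$-Lipschitz on $[0,1]$, the composition $\psi(s):=\varphi(\sqrt{s})$ is $\tfrac{1}{2}$-H\"{o}lder continuous on $[0,1]$ with constant $1$ --- the square root creates a derivative singularity at $0$, but the inequality $|\sqrt{s_1}-\sqrt{s_2}|\le\sqrt{|s_1-s_2|}$ ensures the composition remains globally continuous. A Jackson-type approximation theorem for H\"{o}lder functions then produces a univariate polynomial $p(s)=\sum_{k=0}^{D} a_k s^k$ of degree $D=\poly(1/\epsilon)$ with $\sup_{s\in[0,1]}|\psi(s)-p(s)|\le\epsilon/3$. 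Substituting $s=\norm{\bx}^2$ reduces the problem to uniformly approximating each even radial monomial $\bx\mapsto\norm{\bx}^{2k}$ on $\unitball$, for $k=0,\dots,D$.

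The second step would be to approximate each monomial $\norm{\bx}^{2k}$ by a sum of ridge functions. Here I would use the classical fact that the space of homogeneous polynomials of degree $2k$ in $d$ variables, which has dimension $M_k=\binom{d+2k-1}{2k}=\Ocal(d^{2k})$, is spanned by the $2k$-th powers of linear forms $\{(\bw^{\top}\bx)^{2k}:\bw\in\reals^d\}$. In particular, $\norm{\bx}^{2k}$ admits an explicit decomposition
\[
\norm{\bx}^{2k}=\sum_{i=1}^{M_k}c_i^{(k)}\bigl(\bw_i^{(k)\top}\bx\bigr)^{2k},
\]
where the directions $\bw_i^{(k)}$ are chosen structurally --- for instance as the nodes of a spherical $(2k)$-design on $S^{d-1}$, or via a Waring-type decomposition of the symmetric tensor $\bx^{\otimes 2k}$ --- so that the number of terms is polynomial in $d$ (for fixed $k$). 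Each univariate function $t\mapsto t^{2k}$ is smooth on the bounded range of $\bw_i^{(k)\top}\bx$, hence by Assumption~\ref{asm:assumption1} admits a depth-$2$ $\sigma$-network approximation of width $\poly(k,1/\delta)$ within error $\delta$ per ridge. Concatenating across all $M_k$ ridges and all levels $k\le D$, and choosing $\delta$ small enough to absorb error propagation through the $c_i^{(k)}$ and the outer coefficients $a_k$, produces a single depth-$2$ network matching the stated width bound.

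The main obstacle will be Step~2: establishing the ridge decomposition with coefficients $|c_i^{(k)}|$ whose magnitudes are quantitatively controlled in $d$ (spherical integrals of $(\bu^{\top}\bx)^{2k}$ produce an unfavorable $d^{-k}$ factor whose inverse feeds into the $c_i^{(k)}$), and then carefully tracking how these coefficients compose with the monomial-basis coefficients $|a_k|$ produced by the univariate Jackson approximation (which can be as large as $2^{\Ocal(D)}$). The joint accounting of these two sources of coefficient growth, together with the choice of $\delta$ needed to keep the sum of per-ridge errors within $\epsilon/3$, is what ultimately forces the polynomial-in-$1/\epsilon$ exponent appearing in the final width bound $\exp(\Ocal(\epsilon^{-9}\log(d/\epsilon)))$.
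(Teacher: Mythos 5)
Your overall plan is sound and shares the paper's top-level architecture (reduce to approximating even radial monomials $\norm{\bx}^{2k}$, then build a depth-$2$ $\sigma$-network for each), but the core ``ridge decomposition'' step is carried out by a genuinely different mechanism. The paper constructs a random depth-$2$ network with exponential activations and weights drawn uniformly from $\mathbb{S}^{d-1}$; the expectation of a single neuron $\exp(\bw^{\top}\bx)$ turns out to be a closed-form radial function $F_d(\norm{\bx})$ (the moment generating function of a Beta$(\frac{d-1}{2},\frac{d-1}{2})$ variable, evaluated at $2\norm{\bx}$ and rescaled). Rademacher-complexity concentration then yields a finite-width exponential network uniformly close to $F_d(\norm{\bx})$, Assumption~\ref{asm:assumption1} converts exponential neurons to $\sigma$-neurons, and a delicate Taylor-series cancellation argument (their Lemma~\ref{lem:taylor_approx}, inverting a Vandermonde matrix at a shrinking parameter $\eta$) shows that a linear combination of $F_d(c_kz)$ at $n$ scales reproduces $z^{2n}$ up to controllable error. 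You instead propose to write $\norm{\bx}^{2k}=\sum_i c_i^{(k)}(\bw_i^{(k)\top}\bx)^{2k}$ directly via a spherical cubature or Waring decomposition, then approximate each $t\mapsto t^{2k}$ by a $\sigma$-network via Assumption~\ref{asm:assumption1}. Both routes are viable: yours avoids the probabilistic intermediary $F_d$ and the Vandermonde inversion entirely, at the cost of invoking (and making quantitative) the existence of a positive cubature for the spherical average of $(\bw^{\top}\bx)^{2k}$ with $\Ocal(d^{2k})$ nodes and coefficients summing to $\Theta\p{d(d+2)\cdots(d+2k-2)/(2k-1)!!}$; the paper's route is self-contained, since the beta-MGF identity and concentration give both the existence of good directions and explicit coefficient control for free. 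Your Step~1 (Jackson for $\tfrac{1}{2}$-H\"older $\psi(s)=\varphi(\sqrt{s})$, giving degree $D=\poly(1/\epsilon)$) also differs mildly from the paper's (Bernstein polynomials applied to an even extension of $\varphi$, giving degree $\Ocal(\epsilon^{-3})$ with coefficients $\le 2^n$); if you track coefficients through Jackson as carefully as the paper does through Bernstein, your route plausibly yields a smaller exponent than $\epsilon^{-9}$, which the paper itself notes is likely suboptimal.

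The remaining work in your proposal is exactly where you flag it: a quantitative version of the ridge decomposition. For the record, plain existence is not hard --- $C_{d,k}\norm{\bx}^{2k}=\int_{\mathbb{S}^{d-1}}(\bw^{\top}\bx)^{2k}\,d\sigma(\bw)$ lies in the convex hull of $\set{(\bw^{\top}\bx)^{2k}:\bw\in\mathbb{S}^{d-1}}$ inside the $\binom{d+2k-1}{2k}$-dimensional space of degree-$2k$ homogeneous polynomials, so Carath\'eodory gives nonnegative weights on at most $\binom{d+2k-1}{2k}+1$ directions summing to $C_{d,k}^{-1}=\Theta(d^k/(2k-1)!!)$. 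What then needs care, as you note, is threading these $\Ocal(d^k)$ coefficient sums and the outer polynomial coefficients $a_k$ (which can be $2^{\Ocal(D)}$) into the choice of per-ridge accuracy $\delta$ under Assumption~\ref{asm:assumption1}, whose width scales as $c_{\sigma}\cdot 2k/\delta$. This is the analogue of the coefficient bookkeeping the paper does explicitly in Lemma~\ref{lem:taylor_approx} and the proof of \thmref{thm:poly}, and is the part you would need to fill in to turn the plan into a proof.
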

	
	We note that the $1/\epsilon^9$ exponent might be improvable to some 
	smaller polynomial in $1/\epsilon$ (see proof for details), but overall the 
	dependence on $1/\epsilon$ remains at least exponential. 
	
	The proof of this theorem requires several intermediate results about the 
	approximation capabilities of depth $2$ networks, some of which may be of 
	independent interest. The high-level strategy is the following: 
	\begin{itemize}
	\item First, we 
	consider depth $2$ networks $\sum_i v_i \sigma(\bw_i^\top\bx+b_i)$, where 
	$\sigma(z)=\exp(z)$ is the exponential function. Using properties of the 
	beta distribution, we show that if the 
	weights $\bw_i$ are drawn uniformly and independently from the unit sphere 
	(and $v_i,b_i$ are fixed appropriately), then the resulting network $N$ 
	satisfies $\E[N(\bx)]=F_d(\norm{\bx})$ for some complicated function $F_d$, 
	which 
	depends however only on the norm of $\bx$. Using concentration of measure, 
	we show that the above implies $N(\bx)\approx F_d(\norm{\bx})$ if the 
	width is sufficiently large (\thmref{thm:approx_conf_hypergeo_exp}). 
	\item Next, we use Assumption \ref{asm:assumption1} to show that we can 
	construct a bounded-width network $N(\cdot)$ with any $\sigma$-activation 
	(not just an 
	exponential one), such that $N(\bx)\approx F_d(\norm{\bx})$ 
	(\thmref{thm:approx_conf_hypergeo_sigma}). 
	\item Using a Taylor series argument, we show that a careful 
	linear combination of (not too many) scaled versions of $F_d(\norm{\bx})$ 
	allow us to 
	approximate any even monomial $\norm{\bx}^{2k}$ in the norm of $\bx$. Since 
	a linear combination of depth $2$ networks is still a depth $2$ network, 
	this implies that we can 
	approximate $\norm{\bx}^{2k}$ with some depth $2$ network, again with 
	bounded width (\thmref{thm:poly}). 
	\item Finally, we use a quantitative version of Weierstrass' approximation 
	theorem, to show that we can approximate any 
	Lipschitz radial function $\varphi(\norm{\bx})$ (where $\varphi$ is on $[0,1]$) 
	by a linear combination of even monomials (\lemref{lem:even_poly}). Again, 
	this implies that we can find a bounded-width depth $2$ network which approximates 
	this radial function well.
	\end{itemize}
	
	\subsection{Approximation with Width $\poly(1/\epsilon)$ Networks}
	
	Having considered depth $2$, width $\poly(d)$ networks (for constant 
	accuracy $\epsilon$), we now turn to consider the complementary setting, 
	where the dimension $d$ is fixed, and we show how Lipschitz radial 
	functions can be approximated by width $\poly(1/\epsilon)$ networks. This 
	setting is closer in spirit to universal approximation theorems for 
	depth $2$ networks (namely, on how such networks can approximate any 
	continuous function on a compact domain, if we allow exponential 
	dependencies on $d$). Unfortunately, most such theorems are not 
	quantitative in nature, and do not imply polynomial dependence on 
	$\epsilon$. A noteworthy exception is the line of work pioneered by Barron 
	(see \citet{barron1993universal}), which provide quantitative approximation 
	guarantees in terms of the width and moments of the Fourier transform of 
	the target function $f$. Our main technical contribution here is to show 
	how we can translate such moment-based bounds to a bound applicable to any 
	Lipschitz radial function. For concreteness, we will focus here on networks 
	employing the common ReLU activations (i.e.\ $\sigma(z)=\max\{0,z\}$), 
	although the technique is applicable more generally. We make use of the 
	following recent result from \citet[Theorem 2]{klusowski2018approximation}, 
	which provides an $L_{\infty}$ approximation guarantee for ReLU networks:
		\begin{theorem}[\cite{klusowski2018approximation}]\label{thm:barron}
			Let $ D=[-1,1]^d $. Suppose $ f $ admits a Fourier representation $ f(\bx)=\int_{\reals^d}\exp(i\inner{\bx,\omega})\Fcal(f)(\omega)d\omega $ and
			\[
				v_{f,2}=\int_{\reals^d} \norm{\omega}_1^2 \abs{\Fcal(f)(\omega)}d\omega < \infty.
			\]
			Then there exist depth $2$ ReLU networks $ f_n $, each of width 
			$ n+2 $ such that for all $ n $
			\begin{equation}\label{eq:barron}
				\sup_{\bx\in D}\abs{f(\bx)-f_n(\bx)} \le cv_{f,2}\sqrt{d+\log n}~n^{-1/2-1/d},
			\end{equation}
			for some universal constant $ c>0 $.
		\end{theorem}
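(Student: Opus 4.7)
The plan is to follow the Barron-style strategy of representing $f$ as an integral of ReLU ridge functions and then discretizing via a probabilistic argument. First, I would use the Fourier inversion formula together with the conjugate symmetry of $\Fcal(f)$ (which holds since we may assume $f$ real-valued) to rewrite
\[
f(\bx) = f(0) + \nabla f(0)\cdot \bx + \int_{\reals^d} \p{\cos\p{\inner{\omega,\bx}+b(\omega)} - \cos(b(\omega)) + \sin(b(\omega))\inner{\omega,\bx}} \abs{\Fcal(f)(\omega)}d\omega,
\]
where $b(\omega)$ denotes the phase of $\Fcal(f)(\omega)$. The integrand is a univariate function of $\inner{\omega,\bx}$ that vanishes to second order at the origin, so the identity $h(s)=\int h''(t)\relu{s-t}dt$ (valid whenever boundary terms decay) applied after the rescaling $\omega=r\bu$ with $\bu\in S^{d-1}$ expresses each ridge in the form
\[
\int_{-\norm{\omega}_1}^{\norm{\omega}_1} c_\omega(t)\relu{r\inner{\bu,\bx}-t}dt.
\]
A short calculation using $\abs{\inner{\omega,\bx}}\le\norm{\omega}_1$ on $D$ shows that the total variation of the implied signed measure on $S^{d-1}\times\reals$ is $\Ocal(v_{f,2})$, and the affine correction $f(0)+\nabla f(0)\cdot \bx$ can be encoded exactly using two additional ReLU units, which accounts for the width $n+2$.

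Next I would discretize this integral by drawing $n$ atoms from the normalized total-variation measure and assigning signs, thereby reducing the problem to Monte Carlo sampling of an integral of bounded ReLU ridges. A plain Maurey-type argument would yield an $L_2$-approximation rate of $\Ocal(v_{f,2}/\sqrt{n})$. To strengthen this to an $L_\infty$ guarantee over $D$ with the claimed exponent $n^{-1/2-1/d}$, I would combine two ingredients: (i) a uniform concentration bound over $D$ obtained by placing an $\epsilon$-net of cardinality $\Ocal\p{(n/\epsilon)^d}$ on $D$ and applying a Hoeffding/symmetrization estimate, which contributes the $\sqrt{d+\log n}$ factor; and (ii) a Makovoz-style refinement of Maurey's lemma that exploits the fact that ridge functions with nearby directions in $S^{d-1}$ are close in sup norm on $D$, so stratifying $S^{d-1}$ into spherical caps of diameter $\asymp n^{-1/d}$ and sampling within each stratum gains the additional $n^{-1/d}$ factor.

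I expect the main obstacle to be precisely this $n^{-1/d}$ improvement over the plain Monte Carlo rate. Naive random sampling produces only $n^{-1/2}$, and recovering the extra factor requires exploiting the metric geometry of $S^{d-1}$: the ridges are Lipschitz in their direction with respect to the sup norm on $D$, so the number of ridges needed to $\epsilon$-cover the class scales with the $\epsilon$-entropy of $S^{d-1}$, which is of order $\epsilon^{-(d-1)}$. Balancing the stratification radius against the within-stratum empirical error, while simultaneously maintaining uniform $L_\infty$ control over $\bx\in D$, is the delicate step, and is the point at which \cite{klusowski2018approximation} sharpen earlier Barron-type bounds.
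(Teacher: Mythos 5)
This theorem is imported by citation from \citet{klusowski2018approximation}; the paper does not prove it, so there is no in-paper argument to compare against. Your reconstruction does, however, faithfully track the proof in the cited source: the rewrite of $f$ via Fourier inversion into a superposition of ridge profiles $s\mapsto\cos(s+b(\omega))-\cos b(\omega)+s\sin b(\omega)$ vanishing to second order, the second-derivative integral representation against $[s-t]_+$, the affine residual absorbed into two extra ReLU units (explaining the $n+2$ width), and the total variation controlled by $v_{f,2}$. The two refinements you isolate are also the right ones: a Makovoz-style stratified sampling on $S^{d-1}$ with caps of radius $\asymp n^{-1/d}$ to beat the plain Maurey rate, and an $\epsilon$-net plus concentration over $D$ to convert the $L_2$ guarantee into the stated $L_\infty$ bound with the $\sqrt{d+\log n}$ factor. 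Your identification of the balance between stratification radius and per-stratum sampling error as the delicate step is exactly where the sharpening over the classical Barron bound occurs. In short, the proposal is a correct high-level account of the argument in \citet{klusowski2018approximation}, which this paper invokes as a black box.
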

		Note that in their original theorem statement, 
		\citet{klusowski2018approximation} define the ReLU networks as having 
		an additional linear $ \inner{a_0,\bx} $ term, which we for convenience 
		write as a sum of two ReLU neurons $ \inner{a_0,\bx} = 
		\relu{\inner{a_0,\bx}}+\relu{\inner{-a_0,\bx}} $ and thus omit it from 
		the theorem statement.
		
		We now turn to formally state the main result of this subsection.
		\begin{theorem}\label{thm:poly_eps}
			Suppose $ f(\bx)=\varphi(\norm{\bx}) $ is a $ 1 $-Lipschitz radial 
			function on $ \unitball $. Then there exists a depth $2$ ReLU neural 
			network $ N $, of width $ n=\exp\p{\Ocal\p{d\log\p{1/\epsilon}}} $ 
			such that
			\[
				\sup_{\bx\in\unitball} \abs{f(\bx)-N(\bx)}<\epsilon.
			\]
		\end{theorem}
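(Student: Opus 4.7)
The strategy is to reduce to the setting where Theorem \ref{thm:barron} applies, by first replacing $f$ with a smooth, compactly supported radial surrogate whose Fourier second moment is controllable, and then trading the smoothing scale $\delta$ against the Barron rate. Since the target width is allowed to grow like $\exp(\Ocal(d\log(1/\epsilon)))$ and $d$ is treated as fixed, any multiplicative $d$-dependent factor is harmless; the only thing we need to track carefully is the $\epsilon$-dependence.

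Concretely, I would first build a smooth radial extension $\tilde f$ as follows. Extend $\varphi$ from $[0,1]$ to an even, Lipschitz, compactly supported $\varphi_{\mathrm{ext}}:\reals\to\reals$ by reflecting evenly on $[-1,1]$ and linearly damping to $0$ on $\set{1\le\abs{r}\le 2}$. Mollify $\varphi_{\mathrm{ext}}$ with an even smooth bump of width $\delta=c\epsilon$ to obtain $\tilde\varphi\in C_c^\infty(\reals)$, still even, supported in $[-3,3]$, with $\sup_{r\in[0,1]}\abs{\tilde\varphi(r)-\varphi(r)}\le\epsilon/2$, and with derivatives $\sup_r\abs{\tilde\varphi^{(k)}(r)}\le C_k\delta^{-k}$. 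Using the standard fact that any even $C^\infty$ function on $\reals$ equals $h(\cdot^2)$ for some $h\in C^\infty$, the radial lift $\tilde f(\bx):=\tilde\varphi(\norm{\bx})=h(\norm{\bx}^2)$ is $C_c^\infty(\reals^d)$ and agrees with $f$ on $\unitball$ up to error $\epsilon/2$.

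The second step is to bound $v_{\tilde f,2}=\int_{\reals^d}\norm{\omega}_1^2\abs{\Fcal(\tilde f)(\omega)}\,d\omega$. Using $\norm{\omega}_1\le\sqrt d\,\norm{\omega}$ and the standard decay estimate
\[
(1+\norm{\omega})^k\,\abs{\Fcal(\tilde f)(\omega)}\;\le\;C_{d,k}\sum_{\abs{\alpha}\le k}\norm{\partial^\alpha \tilde f}_{L^1},
\]
I take $k=d+3$ (so that $\int\norm{\omega}^2(1+\norm{\omega})^{-k}d\omega<\infty$) and bound each $\norm{\partial^\alpha\tilde f}_{L^1}$ by $C(d)\,\delta^{-k}$ (chain-rule derivatives of the mollified $\tilde\varphi$, multiplied by the volume of its $\Ocal(1)$-radius support), yielding $v_{\tilde f,2}\le C(d)\,\epsilon^{-(d+3)}$. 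Applying Theorem \ref{thm:barron} to $\tilde f$ on $[-1,1]^d\supset\unitball$ then produces a depth-$2$ ReLU network of width $n+2$ with sup-error at most $c\,v_{\tilde f,2}\sqrt{d+\log n}\,n^{-1/2-1/d}$; setting this $\le\epsilon/2$ and solving for $n$ gives $n=\exp\p{\Ocal\p{d\log(1/\epsilon)}}$, which combined with the $\epsilon/2$ smoothing error proves the theorem.

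The main obstacle, requiring the most care, is this bound on $v_{\tilde f,2}$: one must verify that only a $\delta^{-\Ocal(d)}$ factor is paid for smoothing, and that the $d$-dependent constants absorbed into $C(d)$ do not compound with $\epsilon$. A more direct alternative, if the Sobolev-embedding route proves too lossy, is to compute $\Fcal(\tilde f)$ explicitly via the Hankel-type transform for radial functions, writing $\Fcal(\tilde f)(\omega)=G(\norm{\omega})$ for a one-dimensional transform $G$ of $\tilde\varphi$ and then integrating in spherical coordinates; this sharpens the constants at the cost of heavier Bessel-function bookkeeping.
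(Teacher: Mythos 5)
Your high-level plan coincides with the paper's — replace $f$ by a smoothed surrogate, bound its Fourier second moment $v_{\cdot,2}$, and feed the result into Theorem~\ref{thm:barron} — but the smoothing mechanism and the Fourier estimate are genuinely different, and the paper's route sidesteps the two places where your argument is thin.

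The paper convolves the compactly supported, bounded $f$ (not $\varphi$) with a $d$-dimensional Gaussian of covariance $\frac{\epsilon^2}{4d}I$ and uses the convolution theorem to write $\hat g(\omega)=\hat f(\omega)\exp(-\epsilon^2\|\omega\|^2/8d)$. Since $|\hat f(\omega)|\le\|f\|_{L^1}\le\mu_d(\unitball)$, \emph{all} of the decay needed for $v_{g,2}$ comes from the explicit Gaussian factor; one then integrates in polar coordinates using closed-form Gaussian moments. In particular, no smoothness of $f$ is ever needed or used: the proof only requires $\|f\|_{L^1}<\infty$, not differentiability. Your version mollifies $\varphi$ in one dimension, lifts $\tilde\varphi$ to a radial function $\tilde f(\bx)=\tilde\varphi(\|\bx\|)$, and then controls $\hat{\tilde f}$ via the Sobolev-type estimate $(1+\|\omega\|)^k|\hat{\tilde f}(\omega)|\lesssim\sum_{|\alpha|\le k}\|\partial^\alpha\tilde f\|_{L^1}$. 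This is sound in spirit, but it forces you to justify smoothness of the radial lift at the origin (you invoke the Whitney/Glaeser fact that even $C^\infty$ functions are $h(\cdot^2)$) and to bound mixed derivatives of a composition, neither of which the Gaussian route requires.

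The one place where your sketch is actually optimistic, and which you correctly flag as the hard part, is the claim $\|\partial^\alpha\tilde f\|_{L^1}\le C(d)\delta^{-k}$. Writing $\tilde f=h(\|\bx\|^2)$ and expanding $\partial^\alpha$ by Fa\`a di Bruno, the $j$-th derivative of $h$ near $s=0$ is governed by $\tilde\varphi^{(2j)}(0)$, not $\tilde\varphi^{(j)}(0)$; so the honest bound is more like $\delta^{-\Theta(2k)}$. This does not break the argument — $v_{\tilde f,2}\le C(d)\epsilon^{-\Theta(d)}$ is still of the form $\exp(\Ocal(d\log(1/\epsilon)))$ and the final width estimate is unchanged — but as written the exponent claim is unjustified, and the $d$-dependent constants coming from Fa\`a di Bruno and the Whitney representative need to be checked to be $\exp(\Ocal(d))$ rather than, say, $d!^{\Theta(1)}$ per derivative (they are, but the verification is precisely the "heavy bookkeeping" you defer to). The paper's Gaussian convolution avoids all of this at the cost of only one explicit special-function identity (the absolute moments of a Gaussian), which is why it produces the bound with an explicit constant and no smoothness hypotheses.
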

		
		The proof (in \secref{sec:proofs}) utilizes \thmref{thm:barron}, with 
		the main challenge being that even for a $1$-Lipschitz radial $f$, the 
		coefficient $v_{f,2}$ might be unbounded. Instead, we consider a 
		smoothed approximation $g=f\star\gamma_{\epsilon^2/4d}$, where $ \star $ 
		is the convolution operation and $ \gamma_{\epsilon^2/4d} $ is the 
		Gaussian pdf with mean $ \bzero $ and covariance matrix $ 
		\frac{\epsilon^2}{4d}I $. Since $f$ is Lipschitz, this function is 
		$\Ocal(\epsilon)$-close to $f$ at any point $\bx$. Therefore, to 
		approximate $f$ well, it is sufficient to approximate $g$ well.  
		Moreover, since $g$ 
		represents a convolution with a smooth function, then it is smooth, and 
		therefore its Fourier transform has a rapidly decaying tail. This 
		implies that the coefficient $v_{g,2}$ is bounded (in a manner 
		exponential in $d$ but polynomial in $1/\epsilon$), and an application 
		of \thmref{thm:barron} implies the result.

	\subsection{Impossibility to Approximate with Width $\poly(d,1/\epsilon)$ 
	Networks}

		In this subsection, we complement our previous positive approximation 
		results with negative results. Specifically, we provide two lower 
		bounds, which imply that there are $1$-Lipschitz radial functions, 
		which cannot be approximated to accuracy $\epsilon$ on the unit ball 
		$\unitball$, using depth $2$, width $\poly(d,1/\epsilon)$ networks (see \figref{fig:inapprox}). In 
		a sense, this was already shown in 
		\citet{daniely2017depth,eldan2016power}, as discussed in the 
		introduction. However, a bit of work is needed to apply them to our 
		setting: For example, the result in \citet{eldan2016power} is for a 
		radial function, but not a Lipschitz one, and the result in 
		\citet{daniely2017depth} is not for a radial function. 
		
		Since our results are based on reductions from these papers, we need to 
		make similar assumptions. In particular, we need to require either 
		having an approximation on an unbounded domain, or that the 
		approximating network's parameters are at most exponential in 
		$d$. To the best of our knowledge, it remains a major open problem 
		to prove a depth separation result without either of these 
		two assumptions (namely, on a compact domain such as 
		$\unitball$, and without restrictions on the magnitude of the 
		parameters).

		\begin{theorem}\label{thm:daniely_reduction}
			The following holds for some positive universal
			constants $ c_1,c_2 $, and any depth $2$ network employing a ReLU 
			activation function. Consider the $ 1 $-Lipschitz function $ 
			f(\bx)=\frac{1}{2\pi d^3}\sin\p{2\pi d^3 \norm{\bx}_2^2} $ on $ 
			\unitball $. Suppose $ N $ is a depth $2$ network of width $ 
			w(d,1/\epsilon) $, with weights bounded by $ \frac{2^{d+1}}{2\pi 
			d^3} $, and satisfying $ \sup_{\bx\in 
			\unitball}\abs{N(\bx)-f(\bx)}\le\epsilon $ for any $ \epsilon>0 $ 
			and any $ d\ge2 $. Then for any $ d>c_1 $,
			\[
				w(d,101\exp(2)\pi^3d^3) \ge 2^{c_2d\log d}.
			\]
			In particular, depth $2$ networks of width $ \textnormal{poly}(d,1/\epsilon) $ cannot approximate $ f $ to accuracy $ \epsilon $.
			
		\end{theorem}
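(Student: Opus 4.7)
My plan is a reduction from the depth-$2$ hardness result of \citet{daniely2017depth}, exactly as alluded to in the introduction: Daniely's hard function is easily reduced to $g(\bx) := \sin(2\pi d^3 \norm{\bx}_2^2)$, and the function $f$ in the theorem is simply $f = \frac{1}{2\pi d^3}g$. The prefactor is chosen precisely so that $f$ is (approximately) $1$-Lipschitz: a direct gradient computation gives $\nabla f(\bx) = 2\bx\cos(2\pi d^3 \norm{\bx}_2^2)$, so $\norm{\nabla f(\bx)}\le 2\norm{\bx}\le 2$ on $\unitball$, matching the stated Lipschitz bound up to an inessential constant.

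The key step is a scaling argument. Suppose $N(\bx)=\sum_{i=1}^{w} v_i \relu{\bw_i^{\top}\bx+b_i}$ is a depth-$2$ ReLU network of width $w$, with all weights bounded in magnitude by $\tfrac{2^{d+1}}{2\pi d^3}$, satisfying $\sup_{\bx\in\unitball}\abs{N(\bx)-f(\bx)}\le\epsilon$. Multiplying the output weights $v_i$ by $2\pi d^3$ yields $\tilde N(\bx) := 2\pi d^3 \cdot N(\bx) = \sum_{i=1}^{w} (2\pi d^3 v_i)\relu{\bw_i^{\top}\bx+b_i}$, a depth-$2$ ReLU network of the same width $w$. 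The rescaled output weights are bounded by $2^{d+1}$, while the input weights $\bw_i$ and biases $b_i$ are unchanged (hence still bounded by $\tfrac{2^{d+1}}{2\pi d^3}\le 2^{d+1}$), and uniformly $\sup_{\bx\in\unitball}\abs{\tilde N(\bx)-g(\bx)}\le 2\pi d^3\cdot\epsilon$. Plugging in $\epsilon = \tfrac{1}{101\exp(2)\pi^3 d^3}$ makes this approximation error equal to the absolute constant $\tfrac{2}{101\exp(2)\pi^2}$, independent of $d$.

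To finish, I invoke Daniely's lower bound applied to $g$ on $\unitball$ with the weight cap $2^{d+1}$ (the standard exponential weight restriction in that line of work): any depth-$2$ ReLU network approximating $g$ on $\unitball$ to this fixed absolute constant accuracy must have width at least $2^{c_2 d\log d}$ for all $d$ exceeding some universal threshold $c_1$. Combined with the scaling step above, this immediately yields $w(d,101\exp(2)\pi^3 d^3)\ge 2^{c_2 d\log d}$, as required.

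The main obstacle I foresee is this last step. While the introduction asserts that Daniely's construction reduces to hardness for $g$, his actual statement is for a different (inner-product based) function and is typically phrased with respect to a particular input distribution rather than as an $L_{\infty}$ bound on $\unitball$. Making the reduction precise will require unpacking the construction in \citet{daniely2017depth}, using the polarization identity $\inner{\bx,\bx'}=1-\tfrac{1}{2}\norm{\bx-\bx'}^2$ on unit vectors to re-express an inner-product-based hard function as a radial one in the difference variable, and noting that hardness under any distribution supported in $\unitball$ implies the stronger $L_{\infty}$ hardness on the containing ball. The numeric parameters ($2^{d+1}$ weight bound and the specific accuracy $1/(101\exp(2)\pi^3 d^3)$) are chosen precisely so that the constants produced by this unpacking match those in the theorem statement.
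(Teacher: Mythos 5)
Your overall plan is exactly the paper's: normalize away the $\frac{1}{2\pi d^3}$ prefactor by scaling the output weights, choose $\epsilon$ so the resulting sup-error becomes a fixed absolute constant, and then reduce to the lower bound for the inner-product function $\sin(\pi d^3\inner{\bx_1,\bx_2})$ from \citet[Example 2]{daniely2017depth}. Your scaling step and the arithmetic behind $\epsilon = \frac{1}{101\exp(2)\pi^3 d^3}$ are correct (it yields $2\pi d^3\epsilon = \frac{2}{101\exp(2)\pi^2} < \frac{1}{50\exp(2)\pi^2}$, matching the paper). You also correctly identify the polarization identity as the bridge between the radial function and Daniely's inner-product function.

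However, the step you honestly flag as the ``main obstacle'' is where the real content of the proof lies, and it is left unexecuted. Two concrete pieces are missing. First, after scaling you have a network $\tilde N$ approximating $\sin(2\pi d^3\norm{\bx}^2)$ on $\unitball$ only; but $\bx_1 \pm \bx_2$ for $\bx_1,\bx_2\in\mathbb{S}^{d-1}$ has norm up to $2$, so you cannot directly plug the (sum or difference) into $\tilde N$. The paper handles this by \emph{halving the first-layer weights} to get $N'$, which approximates $\sin(2\pi d^3\norm{\bx/2}^2)$ on $2\unitball$; this conveniently also drops the weight bound from $2^{d+1}$ to $2^d$, which is precisely what Daniely's theorem requires. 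Second, Daniely's hardness is on $\reals^{2d}$ (inputs $(\bx_1,\bx_2)\in\mathbb{S}^{d-1}\times\mathbb{S}^{d-1}$), so one must actually build a depth-$2$ network on $\reals^{2d}$ from $N'$: the paper does this by duplicating hidden weights $\bw_i\mapsto(\bw_i,\bw_i)$ so that $N''((\bx_1,\bx_2)) = N'(\bx_1+\bx_2)$, then computes $\sin(\tfrac{1}{2}\pi d^3\norm{\bx_1+\bx_2}^2) = \sin(\pi d^3 + \pi d^3\inner{\bx_1,\bx_2}) = (-1)^d\sin(\pi d^3\inner{\bx_1,\bx_2})$ on the product of spheres, and absorbs the $(-1)^d$ by negating the output weights. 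Your sketch with the difference $\norm{\bx-\bx'}^2$ would work similarly (using $\bw_i\mapsto(\bw_i,-\bw_i)$), but without these explicit constructions the reduction is not a proof. Your remark that $L_\infty$ hardness is implied by $L_2$ hardness under a supported distribution is correct but addresses a secondary point; the missing network constructions are the crux.
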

		
		We remark that the impossibility result provided in the theorem above is in terms of $L_{\infty}$-type approximation, namely 
		$\sup_{\bx}|n(\bx)-f(\bx)|$ rather than 
		$\E_{\bx}\left[n(\bx)-f(\bx)\right]^2$. This is for simplicity and to make the setting complementary to our positive results from earlier (however, extending it to $L_2$ approximation results is not too difficult). 
		\begin{theorem}\label{thm:unit_ball_reduction}
			The following holds for some positive universal
			constants $ c_1, c_2, c_3, c_4 $, and any network employing an activation function satisfying Assumptions \ref{asm:assumption1} and 2 in \citet{eldan2016power}.	Let $ f(\bx)=\max\set{0,-\norm{\bx}+1} $. For any $ d>c_1 $, there exists a continuous probability distribution on $ \reals^d $, such that for any $ \epsilon>0 $, and any depth $2$ neural network $ N $ satisfying $ \norm{N(\bx)-f(\bx)}_{L_2} \le \epsilon $ and having width $ w(d,1/\epsilon) $, it must hold that
			\[
				w(d,c_2d^6) \ge c_3\exp(c_4d).
			\]
			In particular, depth $2$ networks of width $ \textnormal{poly}(d,1/\epsilon) $ cannot approximate $ f $ to accuracy $ \epsilon $.
		\end{theorem}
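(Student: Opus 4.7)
The plan is to reduce from the depth-$2$ separation result of \citet{eldan2016power}, which provides a radial function $g^{\text{ES}}(\bx) = \tilde g(\norm{\bx})$ (a signed sum of indicators of $\poly(d)$ disjoint annuli around radii $\Theta(\sqrt{d})$) together with a distribution $\mu^{\text{ES}}$ on $\reals^d$, such that any depth-$2$ network obeying the weight-magnitude assumption of that paper needs width $\exp(\Omega(d))$ to approximate $g^{\text{ES}}$ in $L_2(\mu^{\text{ES}})$ to constant error. Although $\tilde g$ is not Lipschitz, it is piecewise constant with $\poly(d)$ pieces, so smoothing each step over a transition of width $1/\poly(d)$ produces a Lipschitz surrogate $g_L$ with $\norm{g_L - g^{\text{ES}}}_{L_2(\mu^{\text{ES}})} = o(1)$; by the triangle inequality, the Eldan--Shamir lower bound then transfers to $g_L$ up to a constant factor in accuracy.

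The key reduction step is to represent this Lipschitz surrogate as a linear combination of $m = \poly(d)$ scaled tent functions,
\[
g_L(\bx) = \sum_{i=1}^{m} c_i \p{\alpha_i - \norm{\bx}}_+,
\]
with $\alpha_i = \Theta(\sqrt{d})$ and $|c_i| \le \poly(d)$, using the fact that continuous piecewise-linear radial functions vanishing at a large radius are exactly spanned by such scaled tents (the decomposition is obtained by reading off the jumps of the derivative at each breakpoint). Since $\p{\alpha_i - \norm{\bx}}_+ = \alpha_i f(\bx/\alpha_i)$, any depth-$2$ network $\Ncal(\bx)$ of width $w$ approximating $f$ immediately yields, by rescaling its input weights, a depth-$2$ network $\alpha_i \Ncal(\bx/\alpha_i)$ of the same width approximating each scaled tent.

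To transfer error bounds uniformly across the $m$ scales, I would choose $\gamma$ to be the uniform mixture
\[
\gamma = \frac{1}{m}\sum_{i=1}^{m}\mu_i,
\]
where $\mu_i$ is the pushforward of $\mu^{\text{ES}}$ under $\bx \mapsto \bx/\alpha_i$; each $\mu_i$ is supported inside $\unitball$ since $\alpha_i$ exceeds the support radius of $\mu^{\text{ES}}$. If $\norm{\Ncal - f}_{L_2(\gamma)} \le \epsilon$, then $\norm{\Ncal - f}_{L_2(\mu_i)} \le \sqrt{m}\,\epsilon$ for each $i$, and a change of variables gives $\norm{\alpha_i \Ncal(\cdot/\alpha_i) - \p{\alpha_i - \norm{\cdot}}_+}_{L_2(\mu^{\text{ES}})} \le \alpha_i\sqrt{m}\,\epsilon$. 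Assembling the depth-$2$ network $\tilde\Ncal(\bx) = \sum_{i=1}^{m} c_i \alpha_i \Ncal(\bx/\alpha_i)$, of width $mw$, the triangle inequality yields $\norm{\tilde\Ncal - g_L}_{L_2(\mu^{\text{ES}})} \le \p{\sum_i |c_i|\alpha_i}\sqrt{m}\,\epsilon = \poly(d)\cdot\epsilon$. Choosing $c_2$ so that $\poly(d)/(c_2 d^6)$ is below the Eldan--Shamir threshold, the lower bound applied to $\tilde\Ncal$ forces $mw \ge \exp(\Omega(d))$, whence $w \ge \exp(\Omega(d))$ since $m = \poly(d)$.

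The main technical obstacle is the first step: constructing $g_L$ so that it is simultaneously close to $g^{\text{ES}}$ in $L_2(\mu^{\text{ES}})$, admits a $\poly(d)$-term tent decomposition with coefficients of moderate magnitude, and still supports the Eldan--Shamir lower bound, while carefully tracking how the scales $\alpha_i$, coefficients $c_i$, and mixture factor $\sqrt{m}$ combine to produce precisely the $d^6$ dependence in the statement. A secondary subtlety is verifying that the rescaled network $\alpha_i \Ncal(\cdot/\alpha_i)$ continues to satisfy the weight-magnitude assumption of \citet{eldan2016power}: the input weights of $\Ncal$ are divided by $\alpha_i = \Theta(\sqrt{d})$ and the output weights are multiplied by $c_i\alpha_i = \poly(d)$, so these rescalings only contribute $\poly(d)$ factors that are absorbed into the permitted weight range.
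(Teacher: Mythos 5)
Your proposal is correct in spirit but takes a genuinely different, and more laborious, route than the paper's proof. The paper does not reduce from the full Eldan--Shamir hard function $g^{\text{ES}}$ (the sum of $\Theta(d^2)$ annulus indicators); it reduces from the inapproximability of a \emph{single} ball indicator $h_r(\norm{\cdot})=\mathbbm{1}\set{\norm{\cdot}\le r}$, invoking a strengthened hardness result from \citet[Thm.~1, Eq.~(4)]{safran2017depth}. This drastically simplifies the bookkeeping: instead of smoothing $\Theta(d^2)$ jumps and decomposing the result into $m=\poly(d)$ tents with coefficients of size $1/\delta=\poly(d)$, the paper replaces the one jump of $h_r$ with a single ramp $f_{r,C}$, and observes that this ramp is exactly
\[
f_{r,C}(\norm{\bx}) = (C^{-1}r+1)\,f\!\p{\tfrac{\bx}{C^{-1}+r}}\;-\;C^{-1}r\,f\!\p{\tfrac{\bx}{r}},
\]
that is, a linear combination of just \emph{two} scaled copies of $f$. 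The approximating network $N'$ is then assembled from two rescaled copies of $N$, so the width only doubles, and the smoothing error is controlled by a single application of Lemma~14 of \citet{eldan2016power}. Your approach buys nothing that the paper's does not, and costs you a $\sqrt{m}$ loss from the mixture measure, a $\sum_i|c_i|\alpha_i$ loss from the coefficients, and the need to re-verify the $L_2(\mu^{\text{ES}})$ smoothing bound across $\Theta(d^2)$ transitions; the paper pays only the single-ramp factor $(C^{-1}r+1)=\Theta(1/\epsilon^2)$, which with $\epsilon=c_0/d^2$ directly produces the stated $d^6$. Your version would likely give a worse polynomial power of $d$, though it would still establish the ``in particular'' conclusion.

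Two small points worth correcting. First, the Eldan--Shamir density $\varphi(\bx)=(R_d/\norm{\bx})^{d/2}J_{d/2}(2\pi R_d\norm{\bx})$ is supported on all of $\reals^d$, not on a ball, so the pushforwards $\mu_i$ are \emph{not} supported inside $\unitball$; fortunately nothing in your argument actually uses compact support, but you should not assert it. Second, you should make explicit the tent coefficients $|c_i|=1/\delta$ coming from the derivative jumps at the ramp endpoints (you wrote only $|c_i|\le\poly(d)$); combined with $m=\Theta(d^2)$ and $\alpha_i=\Theta(\sqrt{d})$ this is what fixes the polynomial exponent, and one still has to check the weight-magnitude assumption survives both the division of input weights by $\alpha_i$ and the multiplication of output weights by $c_i\alpha_i$. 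This last point you handle correctly.
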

		
		\begin{figure}\label{fig:inapprox}
			\includegraphics[width=0.5\columnwidth]{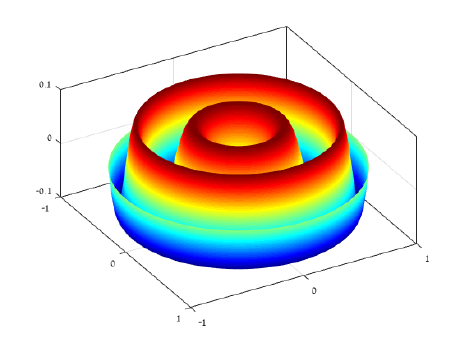}
			\includegraphics[width=0.5\columnwidth]{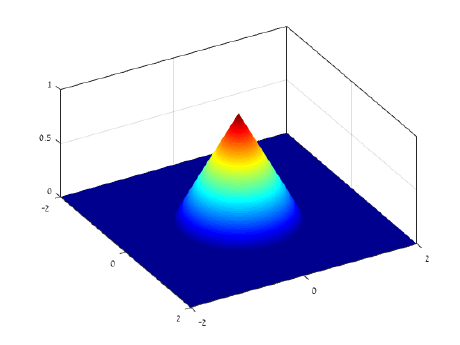}
			\caption{Left figure depicts a scaled version of the function from \thmref{thm:daniely_reduction}, which cannot be approximated on the unit ball $ \unitball $ by a depth $2$ network of width $ \poly(d,1/\epsilon) $ and weights bounded by $ \exp(\Ocal(d)) $. Right figure depicts the function from \thmref{thm:unit_ball_reduction}, which cannot be approximated by a depth $2$ network of width $ \poly(d,1/\epsilon) $ w.r.t.\ a measure supported on $ \reals^d $.}
		\end{figure}

	

	\section{Proofs}\label{sec:proofs}
	
	\subsection{Proof of \thmref{thm:main}}
	
	We begin by stating the following theorem, which establishes the capability 
	of exponential networks to approximate a particular radial 
	function, which we denote by $ F_d $. Our construction for approximating $ 
	F_d $ uses random weights, resulting in a random network which is 
	significantly easier to analyze when exponential activations are considered 
	(basically, since the exponent of a random variable $ X $ is its moment 
	generating function, which for many distributions is well-known and 
	studied).
	
	\begin{theorem}\label{thm:approx_conf_hypergeo_exp}
		For an integer $ k $, define $ k!!=\prod_{i=0}^{\left\lceil k/2 \right\rceil-1}(k-2i) $. For any $ 
		\epsilon>0 $ and natural $ d\ge2 $ there exists an exponential 
		depth $2$ neural network $ N(\bx)=\sum_{i=1}^{n}v_i 
		\exp\left(\bw_i^\top \bx\right)$ on $\reals^d$, of width $ 
		n=\ceil{\frac{36}{\epsilon^2}} $, hidden layer 
		weights $ \bw_i $ satisfying $ \bw_i\in\mathbb{S}^{d-1} $ (the unit 
		sphere), and $|v_i|\leq \frac{1}{n}$, such that
		\begin{equation*}
		\sup_{\bx\in \unitball}\abs{N(\bx)-F_d\p{\norm{\bx}}} \le 
		\epsilon,
		\end{equation*}
		where for all $ z\in[0,1] $,
		\begin{equation*}
		F_d(z) = \sum_{k=0}^{\infty}\frac{(d-2)!!}{(2k)!!(d+2k-2)!!}z^{2k}.
		\end{equation*}
	\end{theorem}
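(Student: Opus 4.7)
The plan is to exhibit the network via the probabilistic method: draw $\bw_1,\dots,\bw_n$ i.i.d.\ uniformly from $\mathbb{S}^{d-1}$ and set every $v_i = 1/n$, so the constraints $|v_i|\leq 1/n$ and $\bw_i\in\mathbb{S}^{d-1}$ are automatic. The proof then reduces to two tasks: (a) identifying $\E[N(\bx)]$ and matching it to $F_d(\|\bx\|)$ exactly, and (b) showing that the random deviation $N(\bx)-\E[N(\bx)]$ is uniformly small over $\bx\in\unitball$ at a rate $O(1/\sqrt{n})$ that is crucially \emph{independent of} $d$. A Markov/concentration argument at the end then shows that the event ``$\sup_{\bx}|N(\bx)-F_d(\|\bx\|)|\leq \epsilon$'' has positive probability as soon as $n\gtrsim 1/\epsilon^2$, so a good realization must exist.

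For (a), I would fix $\bx\in\unitball$ and compute $\E[\exp(\bw^\top\bx)]$ for $\bw$ uniform on $\mathbb{S}^{d-1}$. By rotational invariance $\bw^\top\bx$ is distributed as $\|\bx\|\, w_1$, where $w_1$ is the first coordinate of a uniform unit vector; expanding $\exp$ in its Taylor series, the odd moments vanish by symmetry, giving
\[
\E[\exp(\bw^\top\bx)] \;=\; \sum_{k\geq 0}\frac{\|\bx\|^{2k}}{(2k)!}\,\E[w_1^{2k}].
\]
Since $w_1^2\sim\mathrm{Beta}(1/2,(d-1)/2)$, standard $\Gamma$-function identities (equivalently, the double-factorial identities $(2k)! = (2k)!!(2k-1)!!$) yield $\E[w_1^{2k}] = \frac{(2k-1)!!(d-2)!!}{(d+2k-2)!!}$. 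Substituting and cancelling $(2k-1)!!$ recovers exactly the coefficients defining $F_d$, proving $\E[N(\bx)] = F_d(\|\bx\|)$ pointwise.

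For (b), I would use symmetrization and a Rademacher-complexity bound for the function class $\Fcal=\{\bw\mapsto \exp(\bw^\top\bx):\bx\in\unitball\}$. Writing $\exp(\bw_i^\top\bx) = 1 + \psi(\bw_i^\top\bx)$ with $\psi(0)=0$ and $\psi$ being $e$-Lipschitz on $[-1,1]$, the Ledoux--Talagrand contraction principle reduces the symmetrized sup to a multiple of
\[
\E\sup_{\bx\in\unitball}\frac{1}{n}\,\Bigl|\Bigl\langle \sum_{i=1}^n \epsilon_i \bw_i,\,\bx\Bigr\rangle\Bigr| \;=\; \frac{1}{n}\,\E\Bigl\|\sum_{i=1}^n \epsilon_i \bw_i\Bigr\|,
\]
which, by Cauchy--Schwarz and $\|\bw_i\|=1$, is at most $1/\sqrt{n}$. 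Combined with a $1/\sqrt{n}$ contribution from $\E|\sum_i \epsilon_i|/n$, this yields $\E\sup_{\bx\in\unitball}|N(\bx)-F_d(\|\bx\|)|\leq C/\sqrt{n}$ for an absolute constant $C$; Markov's inequality then guarantees a realization attaining $\sup\leq\epsilon$ as soon as $n\geq \lceil 36/\epsilon^2\rceil$, after tracking constants. The main obstacle here is precisely this \emph{dimension-free} uniform bound: a naive $\delta$-net of $\unitball$ combined with pointwise Hoeffding/Chebyshev introduces a $(3/\delta)^d$ union-bound factor, forcing $n$ to grow polynomially in $d$, which is exactly what we cannot afford. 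The empirical-process route circumvents this by exploiting the fact that the class $\Fcal$ is the composition of a fixed univariate Lipschitz nonlinearity with a linear functional indexed by the unit ball, whose Rademacher complexity is $O(1/\sqrt{n})$ regardless of $d$.
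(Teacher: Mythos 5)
Your plan is correct and gets both key ideas (computing $\E[\exp(\bw^\top\bx)]$ pointwise, then controlling the supremum deviation dimension-freely via symmetrization and contraction). The concentration half, part (b), is essentially the same argument as the paper's: it too writes the deviation in terms of $\phi(z)=\frac{\exp(z)-1}{e-1}$ (so that $\phi(0)=0$ and $\phi$ is $1$-Lipschitz on $[-1,1]$), applies the contraction principle, and bounds the Rademacher complexity of linear functionals over $\unitball$ by $1/\sqrt{n}$. Your normalization via an $e$-Lipschitz $\psi=\exp(\cdot)-1$ works as well, but gives a slightly larger constant; the paper's choice (and a McDiarmid high-probability bound with $\delta=3/4$ rather than a bound on the expected supremum alone) is what makes the final width come out as $\lceil 36/\epsilon^2\rceil$. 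With your $\psi$, a literal constant-chase gives something closer to $\lceil 166/\epsilon^2\rceil$; this is cosmetic and easily repaired by rescaling, but ``after tracking constants'' is a bit optimistic as written. Also, ``Markov's inequality'' is the wrong name at the last step: what you actually want is the trivial fact that a nonnegative random variable cannot everywhere strictly exceed its expectation, which gives a realization with $\sup\le \E[\sup]$ directly and with no loss.

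Where you genuinely diverge from the paper, and in a way that is arguably cleaner, is part (a). The paper proves that $X=\tfrac{1}{2r}W^\top\bx+\tfrac12\sim\mathrm{Beta}\big(\tfrac{d-1}{2},\tfrac{d-1}{2}\big)$ (its Lemma 1), identifies $\E[\exp(W^\top\bx)]$ with the moment-generating function of this Beta variable (a ${}_1F_1$ confluent hypergeometric series), and then needs a separate nontrivial identity (its Lemma 2, proved via Euler's integral representation of ${}_2F_1$ plus repeated integration by parts) to show that $e^{-z}$ times that series equals $F_d(z)$. You instead expand $\exp$ in a Taylor series, use rotational invariance to reduce to moments of a single coordinate $w_1$, kill odd moments by symmetry, and compute $\E[w_1^{2k}]=\frac{(1/2)_k}{(d/2)_k}=\frac{(2k-1)!!\,(d-2)!!}{(d+2k-2)!!}$ from $w_1^2\sim\mathrm{Beta}(1/2,(d-1)/2)$; then $(2k)!=(2k)!!\,(2k-1)!!$ cancels $(2k-1)!!$ and the coefficients of $F_d$ drop out term by term. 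This avoids the entire hypergeometric detour (Lemma 2) and the $\exp(-z)$ factoring, and is the kind of simplification worth keeping.
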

	The proof of \thmref{thm:approx_conf_hypergeo_exp} relies on the 
	observation that by drawing	$\bw_i$ uniformly from the unit 
	sphere, the neuron $\exp(\bw_i^\top\bx)$ has an expected value equal to 
	$F_d(\norm{\bx})$. Setting $v_i=\frac{1}{n}$ for all $i$, and sampling each 
	$\bw_i$ independently, we have from concentration of measure that the 
	resulting network $\frac{1}{n}\sum_{i=1}^{n}\exp(\bw_i^\top\bx)$ gradually 
	converges to this expected value, effectively 
	approximating $F_d$. Before we prove \thmref{thm:approx_conf_hypergeo_exp} 
	however, we would 
	need to evaluate the distribution of the dot product of such a random neuron with its input, as well as derive an equivalent representation of $ F_d $ which we will encounter when proving the theorem. To this end, we have the following two lemmas:
	\begin{lemma}\label{lem:dotprod_dist}
		Suppose $ \bx\in\reals^d $ such that $ \norm{\bx}=r $, and suppose $ 
		W\in\reals^d $ is distributed uniformly on the $ d $-dimensional unit 
		sphere. Then the random variable $ 
		X=\frac{1}{2r}W^{\top}\bx+\frac{1}{2} $ follows a $ 
		\textnormal{Beta}\p{\frac{d-1}{2},\frac{d-1}{2}} $ distribution.
	\end{lemma}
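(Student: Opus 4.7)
The plan is to reduce the problem to a one-dimensional computation via rotational invariance, then identify the density of a suitably shifted and rescaled coordinate of a uniform point on $\mathbb{S}^{d-1}$ with the $\text{Beta}\p{\frac{d-1}{2},\frac{d-1}{2}}$ density.

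\textbf{Step 1 (Rotational reduction).} Since the uniform distribution on $\mathbb{S}^{d-1}$ is invariant under orthogonal transformations, I can choose a rotation $U$ mapping $\bx/r$ to the first standard basis vector $\be_1$. Then $W^{\top}\bx = r(UW)^{\top}\be_1$, and $UW$ is again uniform on $\mathbb{S}^{d-1}$. Consequently $\tfrac{1}{r}W^{\top}\bx$ is distributed identically to $W_1$, the first coordinate of a uniformly random point on $\mathbb{S}^{d-1}$, and it suffices to show that $X=(W_1+1)/2 \sim \text{Beta}\p{\frac{d-1}{2},\frac{d-1}{2}}$.

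\textbf{Step 2 (Density of $W_1$).} I would compute the marginal density of $W_1$ by a standard slicing argument: fixing $W_1=w$ restricts to a $(d-2)$-sphere of radius $\sqrt{1-w^2}$ in the remaining coordinates, whose surface area is proportional to $(1-w^2)^{(d-2)/2}$; the band-width correction for parametrizing the sphere by $w_1$ contributes an extra factor $1/\sqrt{1-w^2}$. Multiplying yields
\[
f_{W_1}(w) \;=\; c_d\,(1-w^2)^{(d-3)/2}, \qquad w\in[-1,1],
\]
for an appropriate normalizing constant $c_d$.

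\textbf{Step 3 (Change of variables).} With $X=(W_1+1)/2$, I have $W_1=2X-1$, $|dW_1/dX|=2$, and the key identity $1-W_1^2 = 1-(2X-1)^2 = 4X(1-X)$. Substituting into Step~2 gives
\[
f_X(x) \;=\; 2 c_d\,(4x(1-x))^{(d-3)/2} \;=\; 2^{d-2}c_d\,\p{x(1-x)}^{(d-3)/2}, \qquad x\in[0,1],
\]
which has exactly the functional form of the $\text{Beta}\p{\frac{d-1}{2},\frac{d-1}{2}}$ density $\frac{1}{B((d-1)/2,(d-1)/2)}\,\p{x(1-x)}^{(d-3)/2}$.

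\textbf{Step 4 (Matching constants).} Both $f_X$ and the target Beta density are probability densities on $[0,1]$ with identical $x$-dependence, so they must agree, and the equality of constants $2^{d-2}c_d = 1/B\p{\tfrac{d-1}{2},\tfrac{d-1}{2}}$ follows. If one wishes to verify this directly, it reduces to Legendre's duplication formula $\Gamma\p{\tfrac{d-1}{2}}\Gamma\p{\tfrac{d}{2}}=2^{2-d}\sqrt{\pi}\,\Gamma(d-1)$, but this step is mostly bookkeeping. The only substantive ingredient is the slicing computation in Step~2; the rest is routine, so I do not expect any serious obstacle.
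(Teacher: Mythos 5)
Your proof is correct, and it takes a somewhat different route from the paper's. Both arguments begin identically with the rotational-invariance reduction to the first coordinate $W_1$ of a uniform point on $\mathbb{S}^{d-1}$. From there, the paper works at the level of cumulative distribution functions: it expresses $\pr[W_1 r \le t]$ as the surface area of a hyperspherical cap, invokes a cited result (Leopardi's Lemma 2.3.15) giving that area as a regularized incomplete beta function $I_{\sin^2(\theta/2)}\!\p{\frac{d-1}{2},\frac{d-1}{2}}$, and then does a trigonometric change of variables to read off the Beta CDF. You instead compute the marginal density of $W_1$ directly by the slicing argument (surface area of the $(d-2)$-sphere slice times the $1/\sqrt{1-w^2}$ band-width Jacobian, giving $\propto (1-w^2)^{(d-3)/2}$), push it through the affine map $X=(W_1+1)/2$ using $1-W_1^2 = 4X(1-X)$, and match the resulting density to the Beta density by normalization. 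Your route is more elementary and self-contained (no external cap-area lemma needed), and it exposes the factor $(x(1-x))^{(d-3)/2}$ transparently; the paper's route sidesteps the Jacobian bookkeeping by leaning on a known closed form for cap areas. Both are clean, and your argument would serve equally well as a proof of the lemma.
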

	
	\begin{proof}
		Since $ W $ is invariant to orthogonal transformations, we may assume 
		w.l.o.g.\ that $ \bx $ is of the form $ \bx=(r,0,\dots,0) $. That is, $ 
		W^{\top}\bx =W_1 r $, where $ W_1 $ is the first coordinate of $ W $. 
		Therefore to determine the distribution of $ W^{\top}\bx $, it suffices 
		to compute the probability of $ W_1 r $ falling in the interval $ 
		\pcc{-r,t} $ for $ t\in\pcc{-r,r} $, or equivalently, $ W_1 $ falling 
		in the interval $ \pcc{-1,\frac{t}{r}} $. Since $ W $ is distributed 
		uniformly on the unit sphere, this is proportional to the area of a 
		hyperspherical cap centered at $ \ba=(-1,0,\ldots,0) $, and defined by 
		$ \mathsf{S}^{d-1}(\ba,\theta)\coloneqq 
		\set{\bb\in\mathbb{S}^{d-1}:\arccos(\inner{\ba,\bb})\le\theta} $.
		This probability is given in terms of the regularized incomplete beta 
		function as
		\[
			I_{\sin^2\frac{\theta}{2}}\p{\frac{d-1}{2}, \frac{d-1}{2}}
		\]
		\citep[Lemma 2.3.15.]{leopardi2007distributing},
		where the spherical radius of the cap, $ \theta $, satisfies $ 
		\frac{t}{r}=\cos(\pi-\theta)=-\cos(\theta) $. Elementary trigonometry 
		reveals that under this condition, it must hold that $ 
		\sin^2\p{\frac{\theta}{2}}=\frac{t}{2r}+\frac{1}{2} $, namely we have
		\[
			\pr\pcc{W^{\top}\bx\in\pcc{-r,t}} = 
			I_{\frac{t}{2r}+\frac{1}{2}}\p{\frac{d-1}{2}, \frac{d-1}{2}},
		\]
		implying that
		\begin{align*}
			\pr\pcc{X\in\pcc{0,\frac{t}{2r}+\frac{1}{2}}} &= 
			\pr\pcc{\frac{1}{2r}W^{\top}\bx+\frac{1}{2}\in\pcc{0,\frac{t}{2r}+\frac{1}{2}}}
			 \\
			&= \pr\pcc{W^{\top}\bx\in\pcc{-r,t}} \\
			&= I_{\frac{t}{2r}+\frac{1}{2}}\p{\frac{d-1}{2}, \frac{d-1}{2}},
		\end{align*}
		i.e.,\ by the change of variables $ x=\frac{t}{2r}+\frac{1}{2} $ we have
		\[
			\pr\pcc{X\in\pcc{0,x}} = I_{x}\p{\frac{d-1}{2}, \frac{d-1}{2}}.
		\]
		It follows immediately that $ X $ is $ 
		\textnormal{Beta}\p{\frac{d-1}{2},\frac{d-1}{2}} $ distributed, 
		concluding the proof of the lemma.
	\end{proof}
	
	\begin{lemma}\label{lem:closed_form_fd}
		We have
		\begin{equation*}
			F_d(z)\coloneqq \sum_{n=0}^{\infty}\frac{(d-2)!!}{(2n)!!(d+2n-2)!!}z^{2n} = \exp\p{-z} 
			\p{\sum_{k=0}^{\infty}\p{\prod_{j=0}^{k-1}\frac{(d-1)/2+j}{d-1+j}}\frac{2^k}{k!}z^k}.
		\end{equation*}
	\end{lemma}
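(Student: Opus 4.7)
The plan is to recognize both sides as two different series representations of the same quantity: $\E[\exp(W^\top\bx)]$, where $W$ is distributed uniformly on $\mathbb{S}^{d-1}$ and $\bx\in\reals^d$ is any vector with $\norm{\bx}=z$ (say $\bx = z\be_1$). By \lemref{lem:dotprod_dist}, $W^\top \bx$ is distributed as $z(2X-1)$ where $X\sim\text{Beta}\p{\tfrac{d-1}{2},\tfrac{d-1}{2}}$, hence
\[
\E[\exp(W^\top\bx)] \;=\; \exp(-z)\,\E[\exp(2zX)].
\]
Matching each side of the lemma to the appropriate form of this expectation finishes the proof; interchanging sum and expectation is automatic since $|W^\top \bx|\le z$ forces uniform convergence of the Taylor series of $\exp(W^\top\bx)$.

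For the right-hand side, I would invoke the well-known fact that the MGF of a Beta distribution is a confluent hypergeometric function: $\E[\exp(tX)] = \pFq{1}{1}\p{\tfrac{d-1}{2};\,d-1;\,t} = \sum_{k\ge0}\frac{((d-1)/2)_k}{(d-1)_k}\frac{t^k}{k!}$. Setting $t=2z$, rewriting the Pochhammer ratio as the product $\prod_{j=0}^{k-1}\frac{(d-1)/2+j}{d-1+j}$, and multiplying by $\exp(-z)$ reproduces the right-hand side of the lemma verbatim.

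For the left-hand side, I would Taylor-expand $\exp(W^\top\bx)$ and take the expectation termwise. Odd moments $\E[(W^\top\bx)^{2k+1}]$ vanish because $W$ and $-W$ share the same distribution, so only even terms contribute. Using $\bx = z\be_1$, each surviving term equals $z^{2k}\,\E[W_1^{2k}]$, and the density of $W_1$ is proportional to $(1-w^2)^{(d-3)/2}$; the substitution $u=w^2$ then yields $\E[W_1^{2k}] = B(k+\tfrac{1}{2},\tfrac{d-1}{2})/B(\tfrac{1}{2},\tfrac{d-1}{2})$. Applying the standard identities $\Gamma(k+\tfrac{1}{2})/\Gamma(\tfrac{1}{2}) = (2k-1)!!/2^k$ and $\Gamma(k+\tfrac{d}{2})/\Gamma(\tfrac{d}{2}) = (d+2k-2)!!/\big((d-2)!!\cdot 2^k\big)$ collapses this to $\E[W_1^{2k}] = (2k-1)!!(d-2)!!/(d+2k-2)!!$, and dividing by $(2k)! = (2k-1)!!(2k)!!$ recovers exactly the coefficient of $z^{2k}$ in $F_d$. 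The only obstacle is notational bookkeeping: checking that the double-factorial identity for $\Gamma(k+d/2)/\Gamma(d/2)$ applies consistently for both even and odd $d$ (the two cases must be written out separately but yield the same final formula), and that the product-form Pochhammer ratio aligns exactly with the expression in the lemma. There is no analytic subtlety, since both series are entire in $z$ and the probabilistic identification above pins them down unambiguously.
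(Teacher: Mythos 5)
Your proof is correct, and it takes a genuinely different route from the paper's. The paper proves this lemma purely algebraically: it forms the Cauchy product of $\exp(-z)$ with the confluent hypergeometric series, identifies the resulting coefficient as $\frac{1}{n!}\pFq{2}{1}(-n,\tfrac{d-1}{2};d-1;2)$, and evaluates this via Euler's integral representation followed by a recursive integration by parts, then tidies up with Gamma and double-factorial identities. You instead recognize both sides as $\E[\exp(W^\top\bx)]$ for $W$ uniform on $\mathbb{S}^{d-1}$ and $\norm{\bx}=z$: the right side drops out from the Beta-MGF representation (via \lemref{lem:dotprod_dist}), and the left side comes from Taylor-expanding $\exp$ and computing the even moments $\E[W_1^{2k}]$ via the Beta function, using $(2k)!=(2k)!!(2k-1)!!$ to collapse to $\frac{(d-2)!!}{(2k)!!(d+2k-2)!!}$. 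Both routes are valid. Your probabilistic argument is shorter, more conceptual, and explains \emph{why} the identity holds rather than just verifying it; it also front-loads the key observation that the paper only uses later, in the proof of \thmref{thm:approx_conf_hypergeo_exp}, making the lemma's role in the larger argument more transparent. The paper's algebraic proof is self-contained and does not rely on the interpretation of $F_d$ as a moment generating function, which may be preferable if one wants the identity in isolation. One small bookkeeping note: your even/odd-$d$ concern is harmless, since the ratio $(d+2k-2)!!/(d-2)!!=d(d+2)\cdots(d+2k-2)$ holds uniformly under the paper's definition $m!!=\prod_{i=0}^{\lceil m/2\rceil-1}(m-2i)$, and the termwise exchange of sum and expectation is justified exactly as you say.
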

	
	\begin{proof}
		Letting $ (x)_k=\prod_{j=0}^{k-1}(x+j) $, we compute
		\begin{align*}
			& \exp\p{-z} 
			\p{\sum_{k=0}^{\infty}\p{\prod_{j=0}^{k-1}\frac{(d-1)/2+j}{d-1+j}}\frac{2^k}{k!}z^k}
			 \\
			=&\p{\sum_{k=0}^{\infty}\frac{(-1)^k}{k!}z^k}\p{\sum_{k=0}^{\infty}\frac{2^k\p{\frac{d-1}{2}}_k}{k!\p{d-1}_k}z^k}
			 \\
			=& 
			\sum_{n=0}^{\infty}\p{\sum_{k=0}^{n}\frac{(-1)^{n-k}2^k\p{\frac{d-1}{2}}_k}{(n-k)!k!(d-1)_k}}z^k.
		\end{align*}
		Since $ (-1)^{n-k}=(-1)^k $ for even $ n $ and $ (-1)^{n-k}=-(-1)^k $ 
		for odd $ n $, it suffices to show that for any natural $ n $ and any 
		integer $ d\ge2 $,
		\begin{equation*}
			a_n(d)\coloneqq\sum_{k=0}^{n}\frac{(-1)^{k}2^k\p{\frac{d-1}{2}}_k}{(n-k)!k!(d-1)_k}
			 = \begin{cases}
			0 & n\text{ odd}\\
			\frac{(d-2)!!}{n!!(d+n-2)!!} & n\text{ 
			even}
			\end{cases}.
		\end{equation*}
		We rewrite
		\begin{equation}\label{eq:sum_to_hgeo}
			a_n(d) = 
			\frac{1}{n!}\sum_{k=0}^{n}(-1)^{k}\binom{n}{k}\frac{\p{\frac{d-1}{2}}_k}{(d-1)_k}2^k
			 = \frac{1}{n!}\pFq{2}{1}\p{-n,\frac{d-1}{2};d-1;2},
		\end{equation}
		where
		\[
			\pFq{2}{1}(a,b;c;z)=\sum_{n=0}^{\infty}\frac{(a)_n(b)_n}{(c)_n}\frac{z^n}{n!}
		\]
		is the Gauss hypergeometric function. Using Euler's integral formula 
		for the Gauss hypergeometric function \citep[p. 65, Theorem 
		2.2.1]{andrews1999special} yields
		\begin{equation}\label{eq:euler_int}
			\pFq{2}{1}\p{-n,\frac{d-1}{2};d-1;2}=\frac{\Gamma(d-1)}{\Gamma\p{\frac{d-1}{2}}^2}\int_{0}^{1}t^{\frac{d-3}{2}}(1-t)^{\frac{d-3}{2}}(1-2t)^ndt.
		\end{equation}
		Simplifying the integral in \eqref{eq:euler_int}, we substitute $ 
		t=0.5-x $, $ dt=-dx $ to get
		\begin{align}
			\int_{0}^{1}t^{\frac{d-3}{2}}(1-t)^{\frac{d-3}{2}}(1-2t)^ndt &= 
			\int_{-0.5}^{0.5}(0.5-x)^{\frac{d-3}{2}}(0.5+x)^{\frac{d-3}{2}}(2x)^ndx
			 \nonumber\\
			&= 2^n\int_{-0.5}^{0.5}(0.25-x^2)^{\frac{d-3}{2}}x^ndx. 
			\label{eq:substitution}
		\end{align}
		Clearly, the integrand in \eqref{eq:substitution} is an odd function 
		when $ n $ is odd, therefore $ a_n(d)=0 $ for any odd $ n $. For even $ 
		n $, integration by parts of $ (0.25-x^2)^{\frac{d-3}{2}}x $ and $ 
		x^{n-1} $ reveals that
		\begin{equation}\label{eq:recursive_rule}
			\int_{-0.5}^{0.5}(0.25-x^2)^{\frac{d-3}{2}}x^ndx = 
			\frac{n-1}{d-1}\int_{-0.5}^{0.5}(0.25-x^2)^{\frac{d-1}{2}}x^{n-2}dx.
		\end{equation}
		Recursively applying the relation in \eqref{eq:recursive_rule} yields
		\begin{equation}\label{eq:recursive_result}
			\int_{-0.5}^{0.5}(0.25-x^2)^{\frac{d-3}{2}}x^ndx = 
			\frac{n-1}{d-1}\cdot\frac{n-3}{d+1}\cdot\ldots\cdot\frac{1}{d+n-3}\int_{-0.5}^{0.5}(0.25-x^2)^{\frac{d-1}{2}}dx.
		\end{equation}
		Substituting $ x=0.5-t $ back in the integral in 
		\eqref{eq:recursive_result} gives
		\begin{align}
			\int_{-0.5}^{0.5}(0.25-x^2)^{\frac{d-1}{2}}dx &= 
			\int_{0}^{1}t^{\frac{d+n-3}{2}}(1-t)^{\frac{d+n-3}{2}}dt 
			\nonumber\\ &= B\p{\frac{d+n-1}{2},\frac{d+n-1}{2}} \nonumber\\ &= 
			\frac{\Gamma\p{\frac{d+n-1}{2}}^2}{\Gamma(d+n-1)} \label{eq:beta},
		\end{align}
		where $ B(x,y) $ denotes the Beta function. Finally, substituting our 
		calculations from Equations 
		(\ref{eq:beta},\ref{eq:recursive_result},\ref{eq:substitution},\ref{eq:euler_int}) in \eqref{eq:sum_to_hgeo}, and using the identities $ \Gamma(z+1)=z\Gamma(z) $ which holds for any real $ z\ge0 $, and $ \Gamma(z+1)=z!=z!!(z-1)!! $ which holds for any integer $ z $, we have
		\begin{align*}
			a_n(d) &= \frac{1}{n!}\cdot\frac{\Gamma(d-1)}{\Gamma\p{\frac{d-1}{2}}^2}\cdot2^n\cdot\frac{n-1}{d-1}\cdot\frac{n-3}{d+1}\cdot\ldots\cdot\frac{1}{d+n-3}\cdot\frac{\Gamma\p{\frac{d+n-1}{2}}^2}{\Gamma(d+n-1)} \\
			&= \frac{(n-1)!!}{n!!(n-1)!!}\cdot\frac{\Gamma(d-1)}{\Gamma\p{\frac{d-1}{2}}^2}\cdot2^n\cdot\frac{1}{d-1}\cdot\frac{1}{d+1}\cdot\ldots\cdot\frac{1}{d+n-3}\cdot\frac{\Gamma\p{\frac{d+n-1}{2}}^2}{\Gamma(d+n-1)} \\
			&= \frac{1}{n!!}\cdot\frac{\Gamma(d-1)}{\Gamma\p{\frac{d-1}{2}}^2}\cdot2^n\cdot\frac{\p{\frac{d-1}{2}}^2}{d-1}\cdot\frac{\p{\frac{d+1}{2}}^2}{d+1}\cdot\ldots\cdot\frac{\p{\frac{d+n-3}{2}}^2}{d+n-3}\cdot\frac{\Gamma\p{\frac{d-1}{2}}^2}{\Gamma(d+n-1)} \\
			&= \frac{1}{n!!}\cdot\frac{\Gamma(d-1)}{\Gamma(d+n-1)}\cdot (d-1)(d+1)\ldots(d+n-3) \\
			&= \frac{1}{n!!}\cdot\frac{(d-2)!!(d-3)!!}{(d+n-2)!!(d+n-3)!!}\cdot \frac{(d+n-3)!!}{(d-3)!!} \\
			&= \frac{(d-2)!!}{n!!(d+n-2)!!}.
		\end{align*}
	\end{proof}
	We are now ready to prove \thmref{thm:approx_conf_hypergeo_exp}.
	
	\begin{proof}[Proof of \thmref{thm:approx_conf_hypergeo_exp}]
		Consider a depth $2$ network of width $ n $, where $ n $ is to be 
		determined later, with exponential activations, $ 0 $ bias terms in the 
		hidden layer, equal weights of $ 1/n $ in the output neuron, and where 
		the weights of each hidden neuron are sampled i.i.d.\ uniformly at 
		random from the unit hypersphere $ \mathbb{S}^{d-1} $.
		Fix $ r $ such that $ \norm{\bx}=r $, then we have from 
		\lemref{lem:dotprod_dist} that the network computes the random function
		\begin{equation}\label{eq:random_net_function}
			N\p{\bx}=\sum_{i=1}^{n}\frac{1}{n}\exp\p{W_i^{\top}\bx} = 
			\frac{1}{n}\sum_{i=1}^{n}\exp\p{2rX_i-r}
		\end{equation}
		where $ X_i\sim \textnormal{Beta}(\frac{d-1}{2},\frac{d-1}{2}) $ are 
		i.i.d.
		Taking expectation in \eqref{eq:random_net_function} yields.
		\begin{align*}
			\E_{X_1,\dots,X_n}\pcc{N\p{\bx}} &= 
			\frac{1}{n}\sum_{i=1}^{n}\E_{X_1,\dots,X_n}\pcc{\exp\p{2rX_i-r}} \\ 
			&= 
			\frac{\exp\p{-r}}{n}\sum_{i=1}^{n}\E_{X_1,\dots,X_n}\pcc{\exp\p{2rX_i}} 
			\\
			&= \frac{\exp\p{-r}}{n}\sum_{i=1}^{n}\E_{X_1}\pcc{\exp\p{2rX_1}}.
		\end{align*}
		Letting $ t=2r $ gives
		\begin{equation}\label{eq:mgf}
			\E\pcc{N\p{\bx}} = \exp\p{-t/2}\E_{X_1}\pcc{\exp\p{tX_1}}.
		\end{equation}
		Conveniently, the expectation in the right hand side of \eqref{eq:mgf} 
		is exactly the moment generating function of a $ 
		\textnormal{Beta}(\frac{d-1}{2},\frac{d-1}{2}) $ random variable, given 
		by
		\[
		\E\pcc{\exp\p{tX}} = 
		\sum_{k=0}^{\infty}\p{\prod_{j=0}^{k-1}\frac{(d-1)/2+j}{d-1+j}}\frac{t^k}{k!},
		\]
		\citep{gupta2004handbook}. By virtue of \lemref{lem:closed_form_fd}, \eqref{eq:mgf} therefore reduces to
		\begin{equation}\label{eq:expected_neuron}
			\E\pcc{N\p{\bx}} = \exp\p{-\norm{\bx}} 
			\sum_{k=0}^{\infty}\p{\prod_{j=0}^{k-1}\frac{(d-1)/2+j}{d-1+j}}\frac{2^k}{k!}\norm{\bx}^k = F_d(\norm{\bx}).
		\end{equation}
		To convert the above expectation equality to a uniform convergence bound we shall use a Rademacher complexity argument. We have that the approximation error is
		\begin{equation}\label{eq:uniformconv}
		\sup_{\bx\in \unitball}|N(\bx)-F_d(\norm{\bx})|
		~=~ \sup_{\bx\in \unitball}\left|\frac{1}{n}\sum_{i=1}^{n}\exp(W_i^\top 
		\bx)-\E[\exp(W^\top\bx)]\right|.
		\end{equation}
		This is equivalent to bounding the uniform convergence of the function 
		class $\Fcal:=\{W\mapsto \exp(W^\top\bx):\bx\in \unitball\}$, whose 
		values are 
		bounded in $[\exp(-1),\exp(1)]$. By standard Rademacher complexity 
		arguments, it is well-known that this is upper bounded by 
		$\Ocal(\sqrt{\log(1/\delta)/n})$ with probability at least $1-\delta$. 
		Specifically, letting $\phi(z):=\frac{\exp(z)-1}{\exp(1)-1}$, we can 
		rewrite \eqref{eq:uniformconv} as
		\[(\exp(1)-1)\cdot\sup_{\bx\in 
		\unitball}\left|\frac{1}{n}\sum_{i=1}^{n}\phi(W_i^\top 
				\bx)-\E[\phi(W^\top\bx)]\right|~.
		\]
		Defining the function class $\Fcal':=\{W\mapsto 
		W^\top\bx:\bx\in\unitball)\}$, we can upper bound the above 
		(with probability at least $1-\delta$ over the sampling of 
		$W_1,\ldots,W_n$) by
		\[
		\left(\exp(1)-1)\right)\cdot\left(2R_n(\phi\circ 
		\Fcal'(W_1,\ldots,W_n))+\sqrt{\frac{2\log(2/\delta)}{n}}\right)~,
		\]
		where 
		$R_n(\phi\circ \Fcal'(W_1,\ldots,W_n)):=\E\sup_{f\in\Fcal'}\
		\left|\frac{1}{n}\sum_{i=1}^{n}\sigma_i \phi(f(W_i))\right|$ is the 
		(empirical) Rademacher complexity of $\Fcal$, and the expectation is 
		over $\sigma_1,\ldots,\sigma_n$ which are sampled independently and 
		uniformly from $\{-1,+1\}$ (see \citet[Theorem 
		3.2]{boucheron2005theory}). Since $W^\top\bx$ takes values in 
		$[-1,+1]$, $\phi$ is $1$-Lipschitz in that domain and $\phi(0)=0$, we 
		can upper bound the above by
		\[
		\left(\exp(1)-1)\right)\cdot\left(2R_n(
		\Fcal'(W_1,\ldots,W_n))+\sqrt{\frac{2\log(2/\delta)}{n}}\right)~,
		\]		
		(see \citet[Theorem 3.3]{boucheron2005theory}). Finally, since $F'$ 
		consists of $1$-Lipschitz linear functions over the unit ball, we have 
		that $R_n(F'(W_1,\ldots,W_n))\leq \sqrt{1/n}$
		(see \citet[Corollary 4.3]{boucheron2005theory}). Overall, we get that 
		\eqref{eq:uniformconv} is at most 
		$(\exp(1)-1)\left(\frac{2}{\sqrt{n}}+\sqrt{\frac{2\log(2/\delta)}{n}}
		\right)$. Picking $\delta=3/4$, this can be upper bounded by 
		$6/\sqrt{n}$. In particular, this means that there \emph{exist} some realizations of
		$W_1,\ldots,W_n$ such that \eqref{eq:uniformconv} is at most 
		$6/\sqrt{n}$. In other words, for any $\epsilon>0$, if we set $n\geq 
		\frac{36}{\epsilon^2}$, we have a depth $2$ Linear network of width $n$ 
		which approximates $F_d(\norm{\bx})$ up to error $\epsilon$.
		
	\end{proof}

	Albeit useful for establishing \thmref{thm:approx_conf_hypergeo_exp}, 
	exponential activations are uncommon in practice. To translate 
	\thmref{thm:approx_conf_hypergeo_exp} to work with more commonly used 
	activations, we utilize the universality of activations satisfying 
	Assumption \ref{asm:assumption1} to approximate an exponential function on 
	a bounded domain to arbitrary accuracy, resulting in a network 
	approximating $ F_d $ for a wide family of activation functions. More 
	formally, we have the following theorem:
	
	\begin{theorem}\label{thm:approx_conf_hypergeo_sigma}
		Suppose $ \sigma:\reals\to\reals $ is an activation satisfying 
		Assumption \ref{asm:assumption1}. Then for any $ \epsilon>0 $ and 
		natural $ d\ge2 $ there exists a depth $2$ neural network $ 
		N:\reals^d\to\reals $ with $ \sigma $ activations of width at most $ 
		c_{\sigma}\epsilon^{-3} $, satisfying $ \sup_{\bx\in \unitball}\abs{N(\bx)-F_d\p{\norm{\bx}}} \le \epsilon $, where $ c_{\sigma}>0 $ depends solely on $ \sigma $.
	\end{theorem}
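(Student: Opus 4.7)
The plan is to start from \thmref{thm:approx_conf_hypergeo_exp}, which supplies a depth-$2$ exponential network $N_e(\bx)=\sum_{i=1}^n v_i \exp(\bw_i^\top\bx)$ of width $n=\Ocal(1/\epsilon^2)$, with $\bw_i$ on the unit sphere and $|v_i|\le 1/n$, satisfying $\sup_{\bx\in\unitball}|N_e(\bx)-F_d(\norm{\bx})|\le \epsilon/2$. The idea is to replace each exponential neuron by a small $\sigma$-sub-network that approximates $\exp$ on the relevant range, and show that the error blow-up is controlled because the output weights $v_i$ are tiny.

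First I would observe that because $\bw_i\in\mathbb{S}^{d-1}$ and $\bx\in\unitball$, Cauchy-Schwarz gives $\bw_i^\top\bx\in[-1,1]$, so we only need to approximate $\exp$ on a fixed compact interval. Define the auxiliary function $\tilde g:\reals\to\reals$ that agrees with $\exp$ on $[-1,1]$ and is extended to be constant outside $[-1-\eta,1+\eta]$ for a small $\eta$ (by linearly interpolating to the constant values $\exp(-1)$ and $\exp(1)$); then $\tilde g$ is $L$-Lipschitz for some $L=\Ocal(1)$ (since $\exp$ has derivative bounded by $\exp(1)$ on $[-1,1]$) and is constant outside $[-R,R]$ for $R=\Ocal(1)$. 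Apply Assumption \ref{asm:assumption1} to $\tilde g$ with accuracy parameter $\delta=\epsilon/2$: this yields scalars $a^{(i)},\{\alpha_j^{(i)},\beta_j^{(i)},\gamma_j^{(i)}\}_{j=1}^{w_i}$ with $w_i\le c_\sigma R L/\delta=\Ocal(1/\epsilon)$ such that $h(t):=a^{(i)}+\sum_j \alpha_j^{(i)}\sigma(\beta_j^{(i)} t-\gamma_j^{(i)})$ satisfies $\sup_{t\in\reals}|\tilde g(t)-h(t)|\le \epsilon/2$, hence in particular $\sup_{t\in[-1,1]}|\exp(t)-h(t)|\le \epsilon/2$.

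Next I would substitute $t=\bw_i^\top\bx$ in each such approximation and form
\[
N(\bx) \;=\; \sum_{i=1}^n v_i\,a^{(i)} \;+\; \sum_{i=1}^n \sum_{j=1}^{w_i} v_i\alpha_j^{(i)}\,\sigma\!\p{\beta_j^{(i)}\bw_i^\top\bx-\gamma_j^{(i)}},
\]
which is a depth-$2$ $\sigma$-network (the first sum is a scalar bias) of total hidden width $\sum_{i=1}^n w_i \le n\cdot \Ocal(1/\epsilon)=\Ocal(1/\epsilon^3)$. A triangle-inequality argument then gives
\[
\sup_{\bx\in\unitball}|N(\bx)-F_d(\norm{\bx})|\;\le\; \sup_{\bx}|N_e(\bx)-F_d(\norm{\bx})| \;+\; \sup_{\bx}\Bigl|\sum_i v_i(\exp(\bw_i^\top\bx)-h^{(i)}(\bw_i^\top\bx))\Bigr|,
\]
and the second term is bounded by $\sum_i |v_i|\cdot (\epsilon/2)\le \epsilon/2$ since $\sum_i|v_i|\le 1$. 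Combining the two $\epsilon/2$ terms yields the desired uniform accuracy $\epsilon$.

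There is no real obstacle here, only bookkeeping of constants: the main point that must be verified carefully is that $\exp$ can be made to satisfy the hypotheses of Assumption \ref{asm:assumption1} (i.e., Lipschitz and constant outside a bounded interval) on a \emph{fixed}, $d$-independent interval, which is exactly what the constraint $\bw_i\in\mathbb{S}^{d-1}$ from \thmref{thm:approx_conf_hypergeo_exp} buys us. Once that is set, multiplying the $\Ocal(1/\epsilon^2)$ factor from the exponential construction by the $\Ocal(1/\epsilon)$ width of each sub-network gives the claimed $c_\sigma \epsilon^{-3}$ bound, with the constant depending only on $\sigma$ through $c_\sigma$ and the universal quantities $R,L$.
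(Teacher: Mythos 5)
Your proposal is correct and follows essentially the same route as the paper's proof: invoke \thmref{thm:approx_conf_hypergeo_exp} to get an exponential network accurate to $\epsilon/2$, then replace each $\exp(\bw_i^\top\bx)$ neuron by a width-$\Ocal(1/\epsilon)$ $\sigma$-subnetwork built from Assumption~\ref{asm:assumption1}, absorbing the affine pre-composition $\bx\mapsto\bw_i^\top\bx$ into the hidden-layer weights, and control the aggregate error by the fact that $\sum_i|v_i|\le 1$. One point where you are slightly more careful than the paper: Assumption~\ref{asm:assumption1} requires the target univariate function to be Lipschitz \emph{and constant outside a bounded interval}, which $\exp$ is not; you explicitly extend $\exp|_{[-1,1]}$ to a function $\tilde g$ that is constant outside $[-R,R]$ before invoking the assumption, whereas the paper applies the assumption to ``$\exp$ on the unit interval'' without spelling out this truncation. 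This is a cosmetic gap in the paper's exposition rather than a real difference of approach, and your bookkeeping of constants ($\Ocal(\epsilon^{-2})\times\Ocal(\epsilon^{-1})=\Ocal(\epsilon^{-3})$) matches the claimed width.
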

	
	\begin{proof}[Proof of \thmref{thm:approx_conf_hypergeo_sigma}]
		First, invoke \thmref{thm:approx_conf_hypergeo_exp} to obtain a width $ 
		n=\ceil{144\epsilon^{-2}} $ exponential network satisfying 
		\begin{equation}\label{eq:good_hypergeo_approx}
			\sup_{\bx\in \unitball}\abs{N(\bx)-F_d(\norm{\bx})}\le 
			\frac{\epsilon}{2}.
		\end{equation} 
		Next, using Assumption 
		\ref{asm:assumption1}, we obtain a depth $2$ $ \sigma $ network 
		approximating the exponential $ z\mapsto\exp(z) $ on the unit interval 
		$ [0,1] $, having width at most $ c_{\sigma}e/\epsilon $. Denote this 
		network as $ N_{\exp} $, we construct a $ \sigma $ network 
		approximating $ F_d $ as follows: For each hidden weight $ \bw_i $ of $ 
		N $, we take a copy of $ N_{\exp} $ and feed it with $ 
		\inner{\bw_i,\bx} $ to obtain $ N_{\exp}(\inner{\bw_i,\bx}) $. Note 
		that $ N_{\exp}(\inner{\bw_i,\bx}) $ is a depth $2$ $ \sigma $ network 
		since the linear transformation $ \inner{\bw_i,\bx} $ can be simulated 
		by modifying the hidden layer of $ N_{\exp} $ to compute it exactly. 
		Defining the network $ \tilde{N}(\bx) = \frac{1}{n}\sum_{i=1}^{n} 
		N_{\exp}(\inner{\bw_i,\bx}) $, which is also a depth $2$ network of width 
		$ c_{\sigma}\epsilon^{-3} $ as a weighted combination of networks 
		(absorbing 
		any absolute constants into $ c_{\sigma} $). We now 
		compute using \eqref{eq:good_hypergeo_approx} for any $ \bx\in 
		\unitball $
		\begin{align*}
			\abs{\tilde{N}(\bx) - F_d(\norm{\bx})} &=
			\abs{\tilde{N}(\bx) - N(\bx) + N(\bx) - F_d(\norm{\bx})} \\
			&\le \abs{\tilde{N}(\bx) - N(\bx)} +\abs{N(\bx) - F_d(\norm{\bx})} 
			\\
			&\le \abs{\frac{1}{n}\sum_{i=1}^{n} N_{\exp}(\inner{\bw_i,\bx}) - 
			\frac{1}{n}\sum_{i=1}^{n} \exp(\inner{\bw_i,\bx})} + 
			\frac{\epsilon}{2} \\
			&\le \frac{1}{n}\abs{\sum_{i=1}^{n} \p{N_{\exp}(\inner{\bw_i,\bx}) - 
			\exp(\inner{\bw_i,\bx})}} + \frac{\epsilon}{2} \\
			&\le \frac{1}{n}\sum_{i=1}^{n}\abs{N_{\exp}(\inner{\bw_i,\bx}) - 
			\exp(\inner{\bw_i,\bx})} + \frac{\epsilon}{2} \\
			&\le \frac{1}{n}\sum_{i=1}^{n}\frac{\epsilon}{2} + 
			\frac{\epsilon}{2} = \epsilon.
		\end{align*}
		Where we note that the boundedness of the weights of the hidden layer 
		of $ N $ and the Cauchy-Schwarz inequality guarantee that we remain in 
		the relevant approximation domain of $ \exp(\cdot) $, as $ 
		\inner{\bw_i,\bx} \le \norm{\bw_i}\cdot\norm{\bx}\le1 $.
	\end{proof}
	
	\thmref{thm:approx_conf_hypergeo_sigma} allows us to approximate the family 
	of functions $ F_d(\norm{\cdot}) $ efficiently using depth $2$ networks with 
	a variety of activations. The following theorem utilizes 
	\thmref{thm:approx_conf_hypergeo_sigma} to approximate even radial 
	monomials. i.e.,\ radial functions of the form $ \norm{\bx}^{2k} $ for some 
	natural $ k $.
	
	\begin{theorem}\label{thm:poly}
		Suppose $ \sigma:\reals\to\reals $ satisfies Assumption 
		\ref{asm:assumption1}. Then for any $ \epsilon>0 $ and any natural $ 
		k\ge1 $, there exists a depth $2$ neural 
		network with $ \sigma $ activations of width $ 
		n=\exp\p{\Ocal\p{k^2\log\p{d/\epsilon}}} $ satisfying
		\begin{equation*}
			\sup_{\bx\in 
			\unitball}\abs{\sum_{i=1}^{n}v_i\sigma\p{\bw_i^{\top}\bx+b_i}+b_0-\norm{\bx}^{2k}}
			 \le \epsilon,
		\end{equation*}
		where the big O notation hides a constant that depends solely on $ \sigma $.
	\end{theorem}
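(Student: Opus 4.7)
\textbf{Proof plan for \thmref{thm:poly}.}
The plan is to express $\norm{\bx}^{2k}$ as a linear combination of rescaled copies $F_d(c_j\norm{\bx})$ for carefully chosen positive scalars $c_0,\ldots,c_K$, and then invoke \thmref{thm:approx_conf_hypergeo_sigma} on each term. Since $F_d(cz)=\sum_{m=0}^{\infty} a_m c^{2m} z^{2m}$ with $a_m=\frac{(d-2)!!}{(2m)!!(d+2m-2)!!}$, rescaling the argument corresponds to multiplying the hidden weights of the approximating network by $c_j$, so no new ingredients beyond \thmref{thm:approx_conf_hypergeo_sigma} are needed; each $F_d(c_j\norm{\bx})$ is itself a depth $2$ $\sigma$-network of comparable width.

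I would pick $K\geq k$ (to be fixed later) and choose $K+1$ distinct scalars $c_0,\ldots,c_K\in(0,1]$ (say equispaced). Solving the Vandermonde system
\[
\sum_{j=0}^{K}\lambda_j\, c_j^{2m} \;=\; \frac{\delta_{mk}}{a_k}, \qquad m=0,1,\ldots,K,
\]
then yields
\[
\sum_{j=0}^{K}\lambda_j F_d(c_j\norm{\bx}) \;=\; \norm{\bx}^{2k} \;+\; \sum_{m>K} a_m\norm{\bx}^{2m}\Big(\sum_{j=0}^{K}\lambda_j c_j^{2m}\Big).
\]
Because $\norm{\bx}\leq 1$ and $|c_j|\leq 1$, the error term is bounded by $(\sum_j|\lambda_j|)\cdot\sum_{m>K} a_m$. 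Since $a_m$ decays like $1/(2^m m!\,d^m)$ — in particular much faster than $1/m!$ — the tail $\sum_{m>K}a_m$ is super-exponentially small, and can be driven below $\epsilon/2$ by taking $K=\Ocal(k+\log(1/\epsilon))$, provided $\sum_j|\lambda_j|$ is not too large.

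The main obstacle is bounding the magnitudes $|\lambda_j|$. By Cramer's rule / Lagrange interpolation, $\lambda_j$ equals $a_k^{-1}$ times the coefficient of $y^k$ in the Lagrange basis polynomial $\ell_j(y)=\prod_{i\neq j}(y-c_i^2)/(c_j^2-c_i^2)$ on the nodes $\{c_i^2\}$. For equispaced nodes this coefficient is at most $\exp(\Ocal(K))$, while $1/a_k\leq (2k)!!(d+2k)^k/(d-2)!!=\exp(\Ocal(k\log(d+k)))$. With $K$ of order $\Ocal(k+\log(1/\epsilon))$ chosen large enough to control the tail, this yields $\max_j|\lambda_j|\leq \exp(\Ocal(k^2\log(d/\epsilon)))$ — the balance between enlarging $K$ (to kill the tail) and the induced blow-up of the Vandermonde coefficients is the most delicate bookkeeping in the proof.

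Having fixed $K$ and the $\lambda_j$'s, I would finally apply \thmref{thm:approx_conf_hypergeo_sigma} to each $F_d(c_j\norm{\bx})$ with target accuracy $\epsilon/(2(K+1)\max_j|\lambda_j|)$. This produces a depth $2$ $\sigma$-network of width $c_\sigma\bigl(2(K+1)\max_j|\lambda_j|/\epsilon\bigr)^3=\exp(\Ocal(k^2\log(d/\epsilon)))$ for each term. Summing the $K+1$ resulting networks (after scaling their output weights by $\lambda_j$) yields a single depth $2$ $\sigma$-network of the same asymptotic width that approximates $\norm{\bx}^{2k}$ uniformly on $\unitball$ to accuracy $\epsilon$, as claimed.
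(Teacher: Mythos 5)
Your proposal is correct and follows the same high-level architecture as the paper's proof: express $\norm{\bx}^{2k}$ as a linear combination of rescaled copies $F_d(c_j\norm{\bx})$ by inverting a Vandermonde system, bound the inverse-Vandermonde coefficients and the power-series tail, then invoke \thmref{thm:approx_conf_hypergeo_sigma} on each term. What differs is the \emph{choice of nodes and the mechanism controlling the tail}. The paper (\lemref{lem:taylor_approx}) takes exactly $k$ nodes in geometric progression $c_j=\eta^j$ and drives $\eta\to0$ (ultimately $\eta=\Theta(\min\{1/k,\epsilon\})$); the tail vanishes because each leftover coefficient picks up a factor $\eta^{2i-n}$, and no real use is made of the decay of the $a_m$'s beyond monotonicity -- the lemma holds for \emph{any} even power series with nonvanishing, non-increasing coefficients. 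You instead take $K+1>k$ equispaced nodes in $(0,1]$ and kill the tail using the super-exponential decay $a_m\lesssim 1/(2^m m!\,d^m)$, which is specific to $F_d$; this is arguably more natural here and, worked out carefully, would even give a slightly \emph{better} width exponent (roughly $\exp(\Ocal(k\log(dk/\epsilon)))$ versus the paper's $\exp(\Ocal(k^2\log(d/\epsilon)))$), because you never pay the $\eta^{-\Ocal(k^2)}$ Vandermonde blow-up. One small imprecision: $K=\Ocal(k+\log(1/\epsilon))$ does not quite close the loop, since the tail condition $(\sum_j|\lambda_j|)\sum_{m>K}a_m\le\epsilon/2$ also has to absorb the $a_k^{-1}=\exp(\Ocal(k\log(dk)))$ factor hidden in $\lambda_j$, forcing $K=\Ocal(k\log(dk)+\log(1/\epsilon))$; your final bound $\max_j|\lambda_j|\le\exp(\Ocal(k^2\log(d/\epsilon)))$ still holds (it is in fact loose), so the theorem follows. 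Also note the stray $/(d-2)!!$ in your estimate of $1/a_k$ is a typo -- it should read $(2k)!!(d+2k)^k$ -- but this does not affect the asymptotics.
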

	Interestingly, apart from its role in 
	proving \thmref{thm:main}, 
	\thmref{thm:poly} also shows the existence of a family of functions that 
	are approximable to accuracy $ \epsilon $ using width polynomial in 
	\emph{both} $ d $ and $ 1/\epsilon $; for any \emph{fixed} $ k $, the 
	radial polynomial $ \norm{\bx}^{2k} $ can be approximated by a width $ 
	\exp\p{\Ocal\p{k^2\log\p{d/\epsilon}}}=\textnormal{poly}(d,1/\epsilon) $ 
	network. Before delving into the proof of \thmref{thm:poly}, however, we 
	will first need the following lemma, which will utilize the power-series 
	representation of $ F_d $ to approximate polynomials of even degree. Note 
	that at this point, the question of approximation is now reduced to a one 
	dimensional problem, since approximating a radial $ \varphi(\norm{\bx}) $ 
	using linear combinations of $ F_d(\norm{\bx}) $ is equivalent to 
	approximating $ \varphi(z) $ using linear combinations of $ F_d(z) $.
	
	\begin{lemma}\label{lem:taylor_approx}
		Suppose $ f(z)=\sum_{k=0}^{\infty} \alpha_{2k}z^{2k} $ converges 
		uniformly for all $ z\in[0,1] $, where $ \alpha_{2k}\neq0 $ is non 
		increasing. Then for sufficiently small $ \epsilon>0 $ and any $ n>0 $, 
		there exist $ b_0,\dots,b_n,c_1,\ldots,c_n\in\reals$, and a universal constant $ c>0 $ 
		such that 
		\begin{equation}\label{eq:poly_approx}
			\sup_{z\in\pcc{0,1}}\abs{b_0+\sum_{k=1}^{n}b_kf(c_kz) - z^{2n}} \le 
			\epsilon,
		\end{equation} 
		where $ \abs{c_k}\le1 $ and $ \abs{b_k} \le 
		\alpha_{2n}^{-1}\p{\frac{2}{\epsilon}}^{cn^2} $ for all $ 
		k\in\set{0,\dots,n} $.
	\end{lemma}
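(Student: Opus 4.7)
The strategy is to reduce to polynomial approximation in the variable $y=z^{2}$ and then exploit the classical fact that the $n$-th finite-difference operator annihilates every polynomial of degree below $n$. Writing $\tilde f(y)\coloneqq f(\sqrt y)=\sum_{j\ge 0}\alpha_{2j}\,y^{j}$ on $[0,1]$, note that $f(c_k z)=\tilde f(c_k^{2}z^{2})$. I would set $c_k=\sqrt{kh}$ for $k=1,\dots,n$, where $h\in(0,1/n)$ is a step size to be tuned, together with the standard finite-difference coefficients
\[
b_k \;=\; \frac{(-1)^{n-k}\binom{n}{k}}{n!\,\alpha_{2n}\,h^{n}}\quad (k=1,\dots,n),\qquad b_0\;=\;\frac{(-1)^{n}\alpha_{0}}{n!\,\alpha_{2n}\,h^{n}},
\]
where $b_0$ absorbs the $k=0$ term of the finite-difference sum. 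Expanding each $\tilde f(khy)=\sum_{j\ge 0}\alpha_{2j}(khy)^{j}$, interchanging summations, and applying the identity $\sum_{k=0}^{n}(-1)^{n-k}\binom{n}{k}k^{j}=n!\,S(j,n)$ for Stirling numbers of the second kind (which vanishes for $j<n$ and equals $n!$ at $j=n$) produces the exact expression
\[
b_0+\sum_{k=1}^{n}b_k\,\tilde f(c_k^{2}y)\;=\;y^{n}\;+\;\frac{1}{\alpha_{2n}h^{n}}\sum_{j>n}\alpha_{2j}(hy)^{j}\,S(j,n).
\]

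To bound the tail, I would use $|y|\le 1$, the elementary inequality $n!\,S(j,n)\le n^{j}$ (by counting surjections from $[j]$ to $[n]$), and the monotonicity hypothesis $|\alpha_{2j}|\le|\alpha_{2n}|$ for $j\ge n$. These reduce the tail to a geometric series in $hn$, controlled by $\frac{(hn)^{n+1}}{h^{n}n!\,(1-hn)}\le\frac{hn\,e^{n}}{1-hn}$ after invoking Stirling's estimate $n^{n}/n!\le e^{n}$. The choice $h=\epsilon/(2ne^{n})$ forces $hn\le 1/2$ and bounds the tail by $\epsilon$, establishing \eqref{eq:poly_approx}. The side conditions follow immediately: $|c_k|=\sqrt{kh}\le\sqrt{nh}\le 1$ for $\epsilon$ small, while $|b_k|\le 2^{n}/(n!\,\alpha_{2n}h^{n})$ combined with $h=\epsilon/(2ne^{n})$ and $n^{n}/n!\le e^{n}$ yields, after taking logarithms, $|b_k|\le\alpha_{2n}^{-1}\exp\!\bigl(n^{2}+O(n)+n\log(1/\epsilon)\bigr)\le\alpha_{2n}^{-1}(2/\epsilon)^{cn^{2}}$ for a universal constant $c$, provided $\epsilon$ is small enough that $\log(2/\epsilon)\ge 1$.

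The hardest part is selecting the step size correctly. The normalizing factor $1/h^{n}$ introduced by dividing out the leading coefficient fights against the geometric decay $(hn)^{j}$ in the tail, and it is only the combination of the monotonicity of $\alpha_{2j}$ with the specific scaling $h\sim \epsilon/(ne^{n})$ that controls both ends of the trade-off: a larger $h$ leaves too big a tail, while a smaller $h$ destroys the coefficient bound. Once this trade-off is isolated, every remaining step is a routine substitution.
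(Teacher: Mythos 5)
Your proof is correct, and it takes a genuinely different route from the paper's. The paper uses geometric dilations $f(\eta z),f(\eta^{2}z),\ldots,f(\eta^{n}z)$, sets up the resulting linear system with a Vandermonde matrix $V(\eta)$, and bounds the entries of $V(\eta)^{-1}$ explicitly to control the $b_k$'s; the tail is then controlled by the decay $\eta^{2ki}$ for $i>n$, with $\eta=\min\{0.5,1/n,\epsilon/8e\}$. Your proof instead works in the squared variable $y=z^{2}$ with an \emph{arithmetic} grid $c_k^{2}=kh$, so that the target coefficients $b_k$ are just the $n$-th finite-difference weights $(-1)^{n-k}\binom{n}{k}/(n!\,\alpha_{2n}h^{n})$ — no matrix inversion needed. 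The low-degree cancellation is then the standard identity $\sum_{k}(-1)^{n-k}\binom{n}{k}k^{j}=n!\,S(j,n)$, and the tail is controlled via $n!\,S(j,n)\le n^{j}$ plus the monotonicity of $\alpha_{2j}$. This is more elementary and avoids the Vandermonde inverse machinery entirely; the trade-off analysis between the $h^{-n}$ normalization and the geometric tail $(hn)^{j}$ plays exactly the role that the $\eta$-tuning plays in the paper, and you land on the same form of bound $|b_k|\le\alpha_{2n}^{-1}(2/\epsilon)^{cn^{2}}$. One thing the paper's Vandermonde formulation gives for free is the ability to hit an arbitrary even polynomial $p(z)=\sum_{k=0}^{n}p_{2k}z^{2k}$ rather than just $z^{2n}$, but since the lemma statement (and its downstream use in Theorem~\ref{thm:poly}) only needs $z^{2n}$, your more streamlined argument is sufficient and, in my view, cleaner.

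Two minor checks that you implicitly rely on and are worth making explicit: (i) the $k=0$ finite-difference term is $(-1)^{n}\binom{n}{0}\tilde f(0)=(-1)^{n}\alpha_{0}$, a constant, which is exactly what your choice of $b_0$ absorbs; (ii) the interchange of the sums over $k$ and $j$ is justified by the uniform convergence of the power series on $[0,1]$ together with $|c_k|\le 1$, which keeps all arguments in $[0,1]$. Neither is a gap, just bookkeeping you'd want to spell out in a final write-up.
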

	The proof of \lemref{lem:taylor_approx} relies on the observation that taking an appropriately chosen linear combination of the form $ f(\eta z),f(\eta^{2} 
	z),\dots,f(\eta^{n} z) $ for some $ \eta>0 $ and presenting it as a power-series, results in all the coefficients of $ z^{2k} $ for $ k<n $ being exactly zero, the coefficient of $ z^{2n} $ being $ 1 $, and the remaining coefficients all decaying rapidly to $ 0 $ as $ \eta\to0 $.
	\begin{proof}
		Let $ p(z)=\sum_{k=0}^{n}p_{2k}z^{2k} $ be some even polynomial, and 
		consider the set of functions $$ f(\eta z),f(\eta^{2} 
		z),\dots,f(\eta^{n} z). $$ These have the following expansions:
		\begin{align*}
			f(\eta z) &=\sum_{k=0}^{\infty} \alpha_{2k}\eta^{2k}z^{k} \\
			f(\eta^{2} z) &=\sum_{k=0}^{\infty} \alpha_{2k}\eta^{4k}z^{k} \\
			&~~\vdots\\
			f(\eta^{n} z) &=\sum_{k=0}^{\infty} \alpha_{2k}\eta^{2kn}z^{k}.
		\end{align*}
		Equating the coefficients $ t_{2i} $, $ i=1,2,\dots $ in $ 
		\sum_{i=1}^{\infty}t_{2i}z^{2i} $, the expansion of $ 
		\sum_{k=1}^{n}b_kf(\eta^{k}z) $, to the coefficients of $ p(z) $, we 
		obtain the matrix equality
		\begin{equation}\label{eq:matrix_equation}
			A\cdot V(\eta)\cdot\bb = \bp,
		\end{equation}
		where $ A=\text{diag}(\balpha) $ is a diagonal matrix with the 
		coefficients $ \balpha=\alpha_2,\dots,\alpha_{2n} $ on its main 
		diagonal, $ V(\eta) $ is the Vandermonde matrix given by
		\[
			V(\eta)=\p{\begin{matrix}
				\eta & \eta^{2} & \dots & \eta^{n} \\
				\eta^{2} & \eta^{4} & \dots & \eta^{2n} \\
				\vdots & \vdots & \ddots & \vdots \\
				\eta^{n} & \eta^{2n} & \dots & \eta^{n^2}
			\end{matrix}},
		\]
		$ \bb=\p{b_1,\dots,b_n} $ and $ \bp = \p{p_2,\dots,p_{2n}}$. Since $ 
		\alpha_{2k}\neq 0 $, and since $ V(\eta) $ is invertible for small 
		enough $ \eta $, \eqref{eq:matrix_equation} can be rearranged to
		\begin{equation*}
			\bb = V(\eta)^{-1}A^{-1}\bp.
		\end{equation*}
		Letting $ b_0 =\alpha_0\sum_{k=1}^{n}b_k $, we have that the 
		coefficients $ t_{2i} $ up to degree $ 2n $ agree with $ p(z) $, thus 
		to establish \eqref{eq:poly_approx}, it remains to bound the tail of 
		the expansion for degrees $ >2n $. To this end, we will first bound 
		each $ b_k $ for $ k=1,\dots,n $.
		We have from H\"older's inequality for all $ b_k $
		\begin{equation}\label{eq:holder}
			\abs{b_k} = 
			\abs{\inner{V(\eta)_k^{-1},(\alpha_2p_2,\dots,\alpha_{2n}p_{2n})}} 
			\le 
			\norm{V(\eta)_k^{-1}}_1\cdot\norm{(\alpha_2^{-1}p_2,\dots,\alpha_{2n}^{-1}p_{2n})}_{\infty},
		\end{equation}
		where $ V(\eta)_k^{-1} $ is the $ k $-th row of $ V(\eta)^{-1} $, given by
		\[
			V(\eta)_{ij}^{-1} = \begin{cases}
				(-1)^{j-1}\frac{\sum\limits_{1\le m_1<\ldots<m_{n-j}\le n \atop 
				m_1,\dots,m_{n-j}\neq i}\eta^{m_1}\cdot\ldots\cdot 
				\eta^{m_{n-j}}}{\eta^{i}\prod\limits_{1\le m\le n\atop m\neq 
				i}(\eta^{m}-\eta^{i})} & 1\le j< n \\
				\frac{1}{\eta^{i}\prod\limits_{1\le m\le n\atop m\neq 
				i}(\eta^{m}-\eta^{i})} & j=n
			\end{cases}
		\]
		\citep{macon1958inverses}. Bounding $ V(\eta)_{ij}^{-1} $, we begin with the denominator to obtain
		for $ \eta^{-1}\ge n $ that if $ i<m $ then
		\[
			\abs{\eta^{m}-\eta^{i}} = \eta^{i}\abs{1-\eta^{m-i}} \ge 
			\p{1-\frac{1}{n}}\eta^{i}.
		\]
		Otherwise, if $ i>m $ then
		\[
			\abs{\eta^{m}-\eta^{i}} = \eta^{m}\abs{1-\eta^{i-m}} \ge 
			\p{1-\frac{1}{n}}\eta^{m}.
		\]
		Therefore
		\begin{align*}
			\eta^{i}\prod\limits_{1\le m\le n\atop m\neq i}(\eta^{m}-\eta^{i}) &\ge 
			\eta^{i}\p{1-\frac{1}{n}}^{n-i}\eta^{i(n-i)} \prod\limits_{1\le m<i} 
			\p{1-\frac{1}{n}}\eta^{m} \\
			&= \p{1-\frac{1}{n}}^{n-1}\eta^{i}\eta^{i(n-i)}\eta^{i(i-1)/2} \\
			&\ge \exp\p{-1}\eta^{-i^2/2+in+i/2}.
		\end{align*}
		Thus
		\begin{align*}
			\frac{1}{\eta^{i}\prod\limits_{1\le m\le n\atop m\neq 
			i}(\eta^{m}-\eta^{i})} \le e\eta^{i^2/2-in-i/2}.
		\end{align*}
		For the numerator we have
		\[
			\sum\limits_{1\le m_1<\ldots<m_{n-j}\le n \atop 
			m_1,\dots,m_{n-j}\neq i}\eta^{m_1}\cdot\ldots\cdot \eta^{m_{n-j}} \le 
			\prod_{k=1}^{n-j}\eta^{k}\le \eta^{(n-j)},
		\]
		which also holds for $ j=n $. Hence we have
		\[
			\abs{V(\eta)_{ij}^{-1}}\le e\eta^{i^2/2-in-i/2+n-j},
		\]
		implying the $ 1 $-norm of the $ k $-th row is upper bounded by
		\[
			\norm{V(\eta)_{k}^{-1}}_1\le 2e\eta^{k^2/2-kn-k/2}.
		\]
		Combining the above with \eqref{eq:holder}, we obtain the upper bound 
		for some $ c_2>0 $
		\begin{align}\label{eq:expansion_term_bound}
			\abs{b_k} &\le 
			2e\max_{j\in[n]}\abs{\alpha_{2j}^{-1}p_{2j}}\eta^{k^2/2-kn-k/2} 
			\nonumber\\
			&\le 2e\alpha_{2n}^{-1}\eta^{k^2/2-kn-k/2} \\ &\le 
			\alpha_{2n}^{-1}\p{\frac{2}{\epsilon}}^{c_2n^2} \nonumber.
		\end{align}
		In general, the coefficient of the term $ z^{2i} $ for $ i>n $ in the 
		expansion of $ b_0+\sum_{k=1}^{n}b_kf(\eta^kz) $ is given by
		\begin{equation*}
			t_{2i}=\alpha_{2i}\sum_{k=1}^{n}b_k\eta^{2ki} .
		\end{equation*}
		Taking the absolute value and combining with 
		\eqref{eq:expansion_term_bound}, we get
		\begin{align*}
			\abs{t_{2i}}&\le \abs{\alpha_{2i}\sum_{k=1}^{n}b_k\eta^{2ki}} \\
			&\le 
			2e\alpha_{2i}\alpha_{2n}^{-1}\sum_{k=1}^{n}\eta^{k^2/2-kn-k/2+2ki} \\
			&\le 2e\sum_{k=1}^{n}\eta^{k^2/2-k/2}(\eta^{n-2i})^{-k} \\
			&\le 2e\sum_{k=1}^{n}(\eta^{n-2i})^{-k} \\
			&\le 4e\eta^{(2i-n)}.
		\end{align*}
		Finally, letting $ \eta=\min\set{0.5,1/n,\epsilon/8e} $ (note that this 
		also entails $ c_k=\eta^{k}\le1 $), we can bound the tail as follows
		\begin{align*}
			\abs{\sum_{i=n+1}^{\infty}t_{2i}x^{2i}} &\le 
			\sum_{i=n+1}^{\infty}\abs{t_{2i}}
			\le 4e\sum_{i=n+1}^{\infty}\eta^{(2i-n)} \\
			&= 4e\sum_{i=1}^{\infty}\eta^{2i} \le 8e\eta
			\le \epsilon.
		\end{align*}
	\end{proof}
	With the help of \lemref{lem:taylor_approx}, we now turn to prove 
	\thmref{thm:poly}.
	\begin{proof}[Proof of \thmref{thm:poly}]
		First, note that the family of functions $ F_d(z) $ satisfy the 
		assumptions in \lemref{lem:taylor_approx} for any $ d\ge2 $, as readily 
		seen by their definition. Now, letting
		\[
			\alpha_{2k}(d)=\frac{(d-2)!!}{(2k)!!(d+2k-2)!!},
		\]
		we obtain from \lemref{lem:taylor_approx} that
		\begin{equation}
			\sup_{z\in\pcc{0,1}}\abs{b_0+\sum_{k=1}^{n}b_kF_d\p{c_kz} - z^{2n}} 
			\le \frac{\epsilon}{2}, \label{eq:taylor_lemma_approx}
		\end{equation}
		for coefficients $ \abs{c_k}\le1 $ and $ b_k $ satisfying
		\begin{align}\label{eq:poly_coeff}
			\abs{b_k}&\le 
			\alpha_{2n}^{-1}(d)\p{\frac{2}{\epsilon}}^{\Ocal(n^2)} \nonumber\\
			&= 
			\frac{(2n)!!(d+2n-2)!!}{(d-2)!!}\p{\frac{2}{\epsilon}}^{\Ocal(n^2)}\nonumber\\
			 &= 
			2^nn!d(d+2)\ldots(d+2n-2)\p{\frac{2}{\epsilon}}^{\Ocal(n^2)}\nonumber\\
			 &\le
			d(d+2)\ldots(d+2n-2)\p{\frac{2}{\epsilon}}^{\Ocal(n^2)}
		\end{align}
		To bound $ d(d+2)\ldots(d+2n-2) $, observe that $ (d+2n-2)\le 
		d^{\log_2n+1} $ for any $ n\ge1 $ and $ d\ge2 $, thus
		\begin{equation*}
			d(d+2)\ldots(d+2n-2) \le d^{n\log_2n+n}
		\end{equation*}
		Plugging the above in \eqref{eq:poly_coeff} yields
		\[
			\abs{b_k}\le d^{n\log_2n+n}\p{\frac{2}{\epsilon}}^{\Ocal(n^2)}.
		\]
		It now remains to approximate the function $ F(\bx) = b_0 + 
		\sum_{k=1}^{n}b_kF_d\p{c_k\norm{\bx}} $ to accuracy $ \epsilon/2 $ 
		(note that the $ b_0 $ is trivial, as it can be easy to simulate with a constant neuron). To this end, invoke 
		\thmref{thm:approx_conf_hypergeo_sigma} with a desired accuracy of $ 
		\frac{\epsilon}{2n\abs{b_n}} $, to obtain a network $ N $ approximating 
		$ F_d(\norm{\bx}) $. We stress that such approximation of $ 
		F_d(\norm{\bx}) $ is obtained for any $ \bx\in\unitball $, and since we 
		have $ \abs{c_k}\le1 $ we are guaranteed to remain in the relevant 
		domain. Taking $ n $ such copies of $ N $, we obtain a width $ 
		8c_{\sigma}n^3\abs{b_n}^3\epsilon^{-3} = 
		\exp\p{\Ocal\p{n^2\log\p{d/\epsilon}}} $ network
		\[
			N^{\prime}(\bx) = b_0 + \sum_{k=1}^{n} b_kN(c_k\bx)
		\] 
		approximating $ F(\bx) $, since
		\begin{align}
			\sup_{\bx\in\unitball}\abs{F(\bx)-N^{\prime}(\bx)}
			&=
			\sup_{\bx\in\unitball}\abs{b_0 + 
			\sum_{k=1}^{n}b_kF_d(c_k\norm{\bx}) - b_0 - 
			\sum_{k=1}^{n}b_kN(c_k\norm{\bx})} \nonumber\\ &\le 
			\sup_{\bx\in\unitball}\sum_{k=1}^{n}\abs{b_k}\abs{F_d(c_k\norm{\bx})
			 - N(c_k\norm{\bx})} \nonumber\\
			&\le \sum_{k=1}^{n}\abs{b_n} \frac{\epsilon}{2n\abs{b_n}} = 
			\frac{\epsilon}{2}. \label{eq:Fd_approx}
		\end{align}
		Combining Equations (\ref{eq:taylor_lemma_approx}) and 
		(\ref{eq:Fd_approx}), we conclude that
		\[
			\sup_{\bx\in\unitball}\abs{N^{\prime}(\bx)-\norm{\bx}^{2n}} \le 
			\sup_{\bx\in\unitball}\abs{N^{\prime}(\bx) - F(\bx)} + \abs{F(\bx) 
			- \norm{\bx}^{2n}} \le \frac{\epsilon}{2} + \frac{\epsilon}{2} = 
			\epsilon.
		\]
	\end{proof}
	
	Before we can prove \thmref{thm:main}, it only remains that we first prove 
	the following lemma, establishing quantitative bounds on the ability of 
	even polynomials having degree $ n $ to approximate arbitrary $ 1 
	$-Lipschitz functions in $ [0,1] $, while having bounded coefficients. More 
	formally, we have the following lemma:
	
	\begin{lemma}\label{lem:even_poly}
		Let $ f:[0,1]\to\reals $ be a $ 1 $-Lipschitz function. Then for any $ 
		\epsilon>0 $, there exists an even polynomial $ p $ of degree $ 
		n=2\ceil{4\epsilon^{-3}} $ such that
		\[
			\sup_{z\in[0,1]}\abs{p(z)-f(z)} \le \epsilon,
		\]
		and where the coefficients of $ p $, denoted $ p_2,p_4,\ldots,p_{n} $ are upper 
		bounded by $ 2^n $.
	\end{lemma}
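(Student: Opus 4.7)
The approach is to apply the Bernstein polynomial construction to the even extension of $f$ to $[-1,1]$, and then symmetrize; the explicit form of the Bernstein polynomial makes the monomial coefficients easy to bound via the Vandermonde convolution.

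First, extend $f$ to $F:[-1,1]\to\reals$ by $F(z):=f(|z|)$. Since $\bigl||z_1|-|z_2|\bigr|\le|z_1-z_2|$, the function $F$ inherits the $1$-Lipschitz property from $f$ and is even. After absorbing $f(0)$ into the unconstrained coefficient $p_0$, we may assume $f(0)=0$, so $\|F\|_\infty\le 1$ on $[-1,1]$. Rescale to $[0,1]$ by $G(u):=F(2u-1)$, which is $2$-Lipschitz, and form its Bernstein polynomial $q_m(u):=\sum_{k=0}^m G(k/m)\binom{m}{k}u^k(1-u)^{m-k}$. The classical Popoviciu-type estimate gives $\sup_{u\in[0,1]}|q_m(u)-G(u)|=O(1/\sqrt{m})$, which already drops below $\epsilon$ for $m$ of order $1/\epsilon^2$; in particular the stated degree $n=2\lceil 4\epsilon^{-3}\rceil$ leaves plenty of slack. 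Pulling back by $u=(z+1)/2$ and then symmetrizing to remove odd-degree terms, define
\[
p(z)\;:=\;\tfrac12\Bigl(q_m\bigl(\tfrac{z+1}{2}\bigr)+q_m\bigl(\tfrac{1-z}{2}\bigr)\Bigr).
\]
This is an even polynomial of degree $m\le n$. Since $F$ is even, $|p(z)-F(z)|\le\tfrac12\bigl(|P_m(z)-F(z)|+|P_m(-z)-F(-z)|\bigr)$ is no larger than the Bernstein error, so $\sup_{z\in[0,1]}|p(z)-f(z)|\le\epsilon$.

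For the coefficient bound, write
\[
P_m(z)\;:=\;q_m\bigl(\tfrac{z+1}{2}\bigr)\;=\;2^{-m}\sum_{k=0}^m G(k/m)\binom{m}{k}(1+z)^k(1-z)^{m-k}.
\]
Expanding both binomials, the absolute value of the coefficient of $z^s$ in $(1+z)^k(1-z)^{m-k}$ is bounded, by the Vandermonde convolution $\sum_{i+j=s}\binom{k}{i}\binom{m-k}{j}=\binom{m}{s}$, by $\binom{m}{s}$. Therefore the coefficient of $z^s$ in $P_m$ is bounded in magnitude by $2^{-m}\|G\|_\infty\binom{m}{s}\sum_{k}\binom{m}{k}=\|G\|_\infty\binom{m}{s}\le\binom{m}{s}\le 2^m\le 2^n$. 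Symmetrization only zeros out odd coefficients, so the same bound applies to $p_2,p_4,\ldots,p_n$, as required.

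\textbf{Main obstacle.} The delicate point is keeping the coefficient bound at $2^n$ rather than $4^n$: a brute-force monomial expansion of the Bernstein polynomial gives naive bounds of order $\binom{m}{\ell}\cdot 2^\ell$, which can sum to $4^m$. The tight bound relies on absorbing the alternating signs into the Vandermonde identity, so that the factor $(1+z)^k(1-z)^{m-k}$ contributes only $\binom{m}{s}\le 2^m$ (rather than $2^m\cdot 2^m$) to the $z^s$ coefficient. Everything else is a routine calculation with classical Bernstein estimates; the degree bound in the lemma is in fact quite loose and could be sharpened to $O(1/\epsilon^2)$ by this approach.
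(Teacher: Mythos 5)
Your proposal is correct and follows essentially the same route as the paper: take the even extension, rescale to $[0,1]$, form the Bernstein polynomial, symmetrize away the odd-degree terms, and bound the monomial coefficients by $2^n$. The only cosmetic differences are that you center at $f(0)$ rather than $f(1/2)$, you invoke the classical Popoviciu $O(1/\sqrt m)$ rate instead of re-deriving the Bernstein approximation bound via Chebyshev (the paper's self-contained derivation is what produces its looser $\epsilon^{-3}$ degree), and you make the coefficient bound explicit through the Vandermonde convolution, where the paper reaches the same $\binom{n}{s}\le 2^n$ bound by the equivalent observation that replacing $(1-t)$ with $(1+t)$ dominates the expansion coefficientwise.
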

	
	We remark that the $-3$ exponent in the result can possibly be improved 
	somewhat, but this will not change the exponential dependence on 
	$1/\epsilon$ in our main theorem. 
	
	The following proof follows along a similar line as the proof provided by 
	S. Bernstein for Weierstrass' approximation theorem (see \citet[Thm. 
	2.7]{koralov2007theory} for the proof), albeit we also bound the magnitude 
	of the coefficients of the approximating polynomial.
	\begin{proof}
		Let $ f:[0,1]\to\reals $ be $ 1 $-Lipschitz. First, by approximating $ 
		f(z)-f(0.5) $ instead, we may assume w.l.o.g.\ that $ f(0.5)=0 $ (adding 
		the zero degree polynomial $ f(0.5) $ to our approximation once 
		obtained). Extend $ f $ to an even function on $ [-1,1] $ given by
		\[
			f(z)=\begin{cases}
				f(z) & z\ge0 \\
				f(-z) & z<0
			\end{cases}.
		\]
		Letting $ g(z)=f(2z-1) $, we linearly shift $ f $ to the unit interval 
		where $ g(z) $ is $ 2 $-Lipschitz. 
		Define the $ n+1 $ Bernstein basis polynomials of degree $ n $ as 
		\[
			b_{\nu,n}(z)=\binom{n}{\nu}z^{\nu}(1-z)^{n-\nu}.
		\]
		It is a well known fact that these polynomials form a partition of 
		unity for any $ n $:
		\begin{equation}\label{eq:partition_of_unity}
			\sum_{\nu=0}^{n}b_{\nu,n}(z)=1.
		\end{equation}
		Define the $ n $-th Bernstein polynomial approximation of $ g $ as
		\[
			P_n(g)(z) = \sum_{\nu=0}^{n} g\p{\frac{\nu}{n}}b_{\nu,n}(z).
		\]
		We compute using \eqref{eq:partition_of_unity} 
		\begin{align}
			\abs{P_n(g)(z)-g(z)}  &= \abs{\sum_{\nu=0}^{n} 
			\binom{n}{\nu}z^{\nu}(1-z)^{n-\nu} \p{g\p{\frac{\nu}{n}} - g(z)}} 
			\nonumber\\
			&\le \sum_{\nu:\abs{\frac{\nu}{n}-z}<\frac{\epsilon}{4}} 
			\binom{n}{\nu}z^{\nu}(1-z)^{n-\nu} \abs{g\p{\frac{\nu}{n}} - g(z)} 
			\label{eq:below_eps}\\
			&~~~~~~~~+\sum_{\nu:\abs{\frac{\nu}{n}-z}\ge\frac{\epsilon}{4}} 
			\binom{n}{\nu}z^{\nu}(1-z)^{n-\nu} \abs{g\p{\frac{\nu}{n}} - g(z)}. 
			\label{eq:above_eps}
		\end{align}
		Since $ g $ is $ 2 $-Lipschitz, we have that $ \abs{\frac{\nu}{n}-z} < 
		\frac{\epsilon}{4} $ implies $ \abs{g(\frac{\nu}{n})-g(z)} < 
		\frac{\epsilon}{2} $, thus (\ref{eq:below_eps}) is upper bounded by
		\[
			\frac{\epsilon}{2}\sum_{\nu:\abs{\frac{\nu}{n}-z}<\frac{\epsilon}{4}}
			 \binom{n}{\nu}z^{\nu}(1-z)^{n-\nu} \le \frac{\epsilon}{2}.
		\]
		Recalling that $ g(0.25)=g(0.75)=0 $, we have from Lipschitzness that $ 
		\sup_{z\in[0,1]} \abs{g(z)}\le0.5 $. Therefore (\ref{eq:above_eps}) is 
		upper bounded by
		\begin{equation}\label{eq:binomial_dist}
			\sum_{\nu:\abs{\frac{\nu}{n}-z}\ge\frac{\epsilon}{4}} 
			\binom{n}{\nu}z^{\nu}(1-z)^{n-\nu}.
		\end{equation}
		Observing \eqref{eq:binomial_dist} is exactly $ 
		\pr\pcc{\abs{\frac{X_n}{n}-z} \ge \frac{\epsilon}{4}} $, where $ 
		X_n\sim B(n,z) $ is binomially distributed. Using Chebyshev's 
		inequality we obtain
		\[
			\pr\pcc{\abs{\frac{X_n}{n}-z} \ge \frac{\epsilon}{4}} \le 
			\frac{16nz(1-z)}{n^2\epsilon^2} \le \frac{4}{n\epsilon^2}.
		\]
		Letting $ n = 2\ceil{4\epsilon^{-3}} $ entails 
		(\ref{eq:above_eps}) is upper bounded by $ \frac{\epsilon}{2} $, 
		yielding
		\[
			 \sup_{z\in[0,1]} \abs{\sum_{\nu=0}^{n} 
			 f\p{\frac{2\nu}{n}-1}b_{\nu,n}(z) - f(2z-1)} = \sup_{z\in[0,1]} 
			 \abs{\sum_{\nu=0}^{n} g\p{\frac{\nu}{n}}b_{\nu,n}(z) - g(z)} \le 
			 \frac{\epsilon}{2} + \frac{\epsilon}{2} = \epsilon,
		\]
		or equivalently by changing $ z=\frac{1+t}{2} $,
		\begin{equation}\label{eq:bernstein_approx}
			\sup_{t\in[-1,1]} \abs{\sum_{\nu=0}^{n} 
			f\p{\frac{2\nu}{n}-1}b_{\nu,n}\p{\frac{1+t}{2}} - f(t)} \le 
			\epsilon.
		\end{equation}
		Denote $ p(t) = \sum_{\nu=0}^{n} 
		f\p{\frac{2\nu}{n}-1}b_{\nu,n}\p{\frac{1+t}{2}} $. We shall now bound 
		the coefficients of the approximating polynomial $ p(t) $. We have
		\begin{align*}
			p(t) &= 
			\sum_{\nu=0}^{n}f\p{\frac{2\nu}{n}-1}\binom{n}{\nu}\p{\frac{t+1}{2}}^{\nu}\p{1-\frac{t+1}{2}}^{n-\nu}
			 \\
			&= 
			\sum_{\nu=0}^{n}f\p{\frac{2\nu}{n}-1}\binom{n}{\nu}\p{\frac{1+t}{2}}^{\nu}\p{\frac{1-t}{2}}^{n-\nu}
			 \\
			&= 
			\frac{1}{2^n}\sum_{\nu=0}^{n}f\p{\frac{2\nu}{n}-1}\binom{n}{\nu}\p{1+t}^{\nu}\p{1-t}^{n-\nu}.
		\end{align*}
		To upper bound the coefficients, observe that taking the absolute value 
		of $ f\p{\frac{2\nu}{n}-1} $ and substituting $ 1-t $ with $ 1+t $ will 
		result in a polynomial with only positive coefficients, upper bounding 
		the ones of $ p(t) $. Therefore
		\begin{align*}
			\frac{1}{2^n}\sum_{\nu=0}^{n}\abs{f\p{\frac{2\nu}{n}-1}}\binom{n}{\nu}\p{1+t}^{\nu}\p{1+t}^{n-\nu}
			 &= 
			\frac{1}{2^n}\sum_{\nu=0}^{n}\abs{f\p{\frac{2\nu}{n}-1}}\binom{n}{\nu}\p{1+t}^{n}
			 \\
			&\le \frac{\p{1+t}^{n}}{2^{n+1}}\sum_{\nu=0}^{n}\binom{n}{\nu} \\
			&= \frac{1}{2}(1+t)^n
		\end{align*}
		Clearly, the coefficients of $ \frac{1}{2}(1+t)^n $ are upper bounded 
		by $ 2^n $. Finally, consider the even polynomial
		\[
			\tilde{p}(t) = \frac{1}{2} \p{p(t) + p(-t)}.
		\]
		Its even coefficients are equal to those of $ p $ and are thus bounded 
		by $ 2^n $. Moreover, we have
		\begin{equation}\label{eq:minus_berstein}
			\sup_{t\in[-1,1]} \abs{p(-t) - f(-t)} \le \epsilon.
		\end{equation}
		By virtue of $ f $ being even we have $ f(t)=\frac{1}{2}\p{f(t)+f(-t)} 
		$, and by Equations (\ref{eq:bernstein_approx}) and 
		(\ref{eq:minus_berstein}) we get for any $ t\in[-1,1] $
		\begin{align*}
			\abs{\tilde{p}_n(t)-f(t)} &= \abs{\frac{1}{2}\p{p(t)+p(-t)} - 
			\frac{1}{2}\p{f(t)+f(-t)}} \\
			&\le \frac{1}{2}\abs{p(t) - f(t)} + \frac{1}{2}\abs{p(-t) - f(-t)} 
			\le \epsilon,
		\end{align*}
		concluding the proof of the lemma.
	\end{proof}
	
	We are finally ready to prove \thmref{thm:main}.
	
	\begin{proof}[Proof of \thmref{thm:main}]
		From \lemref{lem:even_poly}, we have an even polynomial $ 
		p(z)=\sum_{k=0}^{n/2}p_{2k}z^{2k} $ of degree $ n = 2\ceil{32\epsilon^{-3}} $, such that
		\[
			\sup_{z\in[0,1]}\abs{p(z) - \varphi(z)} \le \frac{\epsilon}{2},
		\]
		thus also
		\begin{equation}\label{eq:main_poly}
			\sup_{\bx\in\unitball}\abs{p(\norm{\bx}) - f(\bx)} = 
			\sup_{\bx\in\unitball} \abs{p(\norm{\bx}) - \varphi(\norm{\bx})} 
			\le \frac{\epsilon}{2}.
		\end{equation}
		Invoke \thmref{thm:poly} $ \frac{n}{2} $ times to approximate each of $ 
		\norm{\bx}^2,\norm{\bx}^4,\ldots,\norm{\bx}^{n} $ to accuracy $ 
		\frac{\epsilon}{n2^n} $, using $ \frac{n}{2} $ depth $2$ networks $ N_k 
		$, $ k=1,2,\ldots, n/2 $, with $ \sigma $ activations of width $ w\le 
		c_{\sigma}\p{\frac{2nd2^n}{\epsilon}}^{\Ocal(n^2)} = 
		c_{\sigma}\p{\frac{2d}{\epsilon}}^{\Ocal(n^3)} $. Thus obtaining for 
		any $ k\in\set{1,2,\ldots,n/2} $
		\begin{equation}\label{eq:monomial_approx}
			\sup_{\bx\in \unitball}\abs{N_k(\bx) - \norm{\bx}^{2k}} \le 
			\frac{\epsilon}{n2^n}.
		\end{equation}
		Consider the depth $2$ $ \sigma $ network $ N $ concatenating the 
		networks $ p_{2k}\cdot N_k $, having output bias of $ p_0 $ and having 
		width 
		\[
			w^{\prime}\le 
			c_{\sigma}\frac{n}{2}\p{\frac{2d}{\epsilon}}^{\Ocal(n^3)} = \exp\p{\Ocal\p{\epsilon^{-9}\log(d/\epsilon)}}.
		\]
		We compute for any $ \bx\in\unitball $
		\begin{align*}
			\abs{N(\bx) - f(\bx)} &\le \abs{N(\bx) - p(\norm{\bx})} + 
			\abs{p(\norm{\bx}) - f(\bx)} \\
			&= \abs{p_0 + \sum_{k=1}^{n/2} p_{2k}N_k(\bx) - \sum_{k=0}^{n/2} 
			p_{2k}\norm{\bx}^{2k}} + \abs{p(\norm{\bx}) - f(\bx)} \\
			&\le \sum_{k=1}^{n/2} \abs{p_{2k}}\abs{N_k(\bx) - \norm{\bx}^{2k}} 
			+ \abs{p(\norm{\bx}) - f(\bx)}.
		\end{align*}
		From Equations (\ref{eq:main_poly}) and (\ref{eq:monomial_approx}), the 
		above is upper bounded by
		\[
			\sum_{k=1}^{n/2} 2^n\frac{\epsilon}{n2^n} + \frac{\epsilon}{2} = 
			\frac{\epsilon}{2} + \frac{\epsilon}{2} = \epsilon.
		\]
		The proof of \thmref{thm:main} is complete.
	\end{proof}

		\subsection{Proof of \thmref{thm:poly_eps}}
			Let $ f(\bx)=\varphi(\norm{\bx}) $ be $ 1 $-Lipschitz on $ 
			\unitball $. By setting the bias term of the output neuron of the 
			approximating depth $2$ network to $ b_0=f(\bzero) $, we may assume 
			w.l.o.g.\ that $ f(\bzero)=0 $ to begin with. Moreover, since we do 
			not care about the approximation attained on $ 
			\reals^d\setminus\unitball $, we may set $ f(\bx)=0 $ for any $ 
			\bx\in\reals^d\setminus\unitball $.
			
			Now, instead of uniformly approximating $ f $ directly, we can 
			approximate a smoothed $ \epsilon/2 $-approximation of it attained 
			by $ g = f \star \gamma_{\epsilon^2/4d}$, where $ \star $ is the 
			convolution operation and $ \gamma_{\epsilon^2/4d} $ is the Gaussian 
			density function with mean $ \bzero $ and covariance matrix $ 
			\frac{\epsilon^2}{4d}I $. Equivalently, we can define $g$ as
			\[
			g(\bx) = \E_{\bz}f(\bx+\bz)
			\]
			where $\bz$ is distributed according to $\gamma_{\epsilon^2/4d}$. We 
			note that this is a uniform $\epsilon/2$ approximation of $f$, 
			since $$|g(\bx)-f(\bx)|\leq \E_{\bz}|f(\bx+\bz)-f(\bx)|\leq 
			\E_{\bz}\norm{\bz}~\leq 
			\sqrt{\E_{\bz}\norm{\bz}^2}~=~\sqrt{\frac{\epsilon^2}{4}}=\frac{\epsilon}{2}.$$
			Since smooth functions have well-behaved Fourier transforms, this 
			will make the use of \thmref{thm:barron} much more convenient. We 
			thus have that attaining a uniform $ \epsilon/2 $-approximation of 
			$ g $ on $ \unitball $ will suffice to finish the proof.
			
			We begin by upper bounding $ \norm{f}_{L_1} $. Since $ f(\bzero)=0 
			$ and $ f $ is $ 1 $-Lipschitz, we have that $ \norm{f}_{L_1}\le 
			\int_{\unitball} d\bx = \mu_d(\unitball) $, where $ \mu_d $ denotes the $ d $-dimensional Lebesgue measure. Consequentially, since an $ L_1 $ upper bound implies a similar upper bound on the $ 
			L_{\infty} $ norm of the Fourier transform, we have that $ 
			\hat{f}(\omega)\le \mu_d(\unitball) $ for any $ \omega\in\reals^d 
			$. Since the Fourier transform of a Gaussian pdf is another 
			Gaussian with inverse variance, we have from the 
			convolution-multiplication theorem that $ \hat{g}(\omega) = 
			\hat{f}(\omega)\exp\p{-\epsilon^2\norm{\omega}^2/8d} $, for all $ 
			\omega\in\reals^d $. We thus compute
			\begin{align}
				v_{g,2} &= \int_{\reals^d} \norm{\omega}_1^2 
				\abs{\Fcal(g)(\omega)}d\omega \nonumber\\
				&\le 
				\int_{\reals^d}d^{0.5}\norm{\omega}_2^2\abs{\hat{f}(\omega)}\abs{\exp\p{-\epsilon^2\norm{\omega}_2^2/8d}}d\omega
				 \nonumber\\
				&\le d^{0.5}\mu_d(\unitball) 
				\int_{r=0}^{\infty}\int_{\mathbb{S}^{d-1}_r} \norm{\omega}_2^2 
				\exp\p{-\epsilon^2\norm{\omega}_2^2/8d}d\omega dr \nonumber\\
				&= d^{0.5}\mu_d(\unitball)\int_{r=0}^{\infty}\int_{\mathbb{S}^{d-1}_r} 
				r^2 \exp\p{-\epsilon^2 r^2/8d}d\omega dr \nonumber\\
				&= 
				d^{0.5}\mu_d(\unitball)\mu_{d-1}(\mathbb{S}_1^{d-1})\int_{r=0}^{\infty}
				 r^{d+1} \exp\p{-\epsilon^2 r^2/8d} dr \nonumber\\
				&= 
				\frac{1}{2}d^{0.5}\mu_d(\unitball)\mu_{d-1}(\mathbb{S}_1^{d-1})\int_{-\infty}^{\infty}
				 \abs{r}^{d+1} \exp\p{-\epsilon^2 r^2/8d}dr \nonumber\\
				&= 
				\frac{1}{2}d^{0.5}\mu_d(\unitball)\mu_{d-1}(\mathbb{S}_1^{d-1})\frac{1}{\sqrt{\pi}}2^{(d+1)/2}
				 \Gamma\p{\frac{d}{2}+1}\p{\frac{2\sqrt{d}}{\epsilon}}^{d+1} 
				\label{eq:normal_abs_moment}\\
				&\le \frac{\pi^{d-0.5}}{\Gamma(d/2)}d^{0.5}2^{1.5(d+1)}\p{\frac{\sqrt{d}}{\epsilon}}^{d+1} 
				\label{eq:ball_vol_and_area}\\
				&\le \pi^{d-0.5}d^{0.5}\p{\frac{2e}{d}}^{d/2-1}2^{1.5(d+1)}\p{\frac{\sqrt{d}}{\epsilon}}^{d+1} 
				\label{eq:gamma_ineq}\\
				&= 2^{2d+0.5}\exp\p{0.5d-1}\pi^{d-0.5}d^2\epsilon^{-d-1}
				\nonumber\\
				&=
				\exp\p{\Ocal\p{d\log\p{1/\epsilon}}}
				\label{eq:fourier_bound}
			\end{align}
			where \eqref{eq:normal_abs_moment} is due to the absolute moments 
			of a normal variable $ X $ with mean $ 0 $ and standard deviation $ 
			\sigma $ satisfying $ 
			\E\pcc{\abs{X}^{d+1}}=\sigma^{d+1}\frac{2^{(d+1)/2}\Gamma\p{d/2+1}}{\sqrt{\pi}} $ (see \citet[Eq. (18)]{winkelbauer2012moments}), \eqref{eq:ball_vol_and_area} is due to $ \mu_d(\unitball)=\frac{\pi^{d/2}}{\Gamma(d/2+1)} $ and $ \mu_{d-1}(\mathbb{S}_1^{d-1}) = \frac{2\pi^{d/2}}{\Gamma(d/2)} $, and \eqref{eq:gamma_ineq} is due to the inequality $ \Gamma(z)\ge \p{\frac{z}{e}}^{z-1} $.
			
			We now split our analysis into two cases, depending on the value of 
			$ c $, the constant guaranteed from \thmref{thm:barron}. In both 
			cases we will need the following:
			\begin{claim}\label{clm:sup_ineq}
				We have
				\[
					\sup_{x,d\in[3,\infty) \times [2,\infty)} 
					\sqrt{\frac{d+\log dx^3}{dx}}<1.
				\]
			\end{claim}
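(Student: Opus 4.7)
The plan is to show that the function $h(x,d) := \frac{d + \log(dx^3)}{dx}$ attains its supremum on $[3,\infty)\times[2,\infty)$ at the corner $(x,d)=(3,2)$, and then to verify by a direct numerical estimate that $h(3,2)<1$, whence $\sqrt{h}<1$ everywhere on the domain.

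First I would rewrite
\[
h(x,d) = \frac{1}{x} + \frac{\log d + 3\log x}{dx},
\]
and compute the partial derivatives. The $x$-derivative equals a positive multiple of $3 - d - \log d - 3\log x$, and the $d$-derivative equals a positive multiple of $1 - \log d - 3\log x$. Since $\log 2 + 3\log 3 > 3$, both of these quantities are strictly negative throughout the region $x \ge 3$, $d \ge 2$. Consequently $h$ is strictly decreasing in each variable separately on the domain, so its supremum is attained at the corner $(x,d)=(3,2)$.

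Next I would plug in to get $h(3,2) = \frac{2 + \log 54}{6}$, and it remains to check $\log 54 < 4$. This follows from $e^4 \approx 54.598 > 54$, so $h(3,2) < 1$ and hence $\sup \sqrt{h} < 1$.

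No step here is a real obstacle; the only place one needs care is confirming that the negative sign of the partial derivatives persists throughout the unbounded region (as opposed to just at the corner), but monotonicity of $\log$ makes this immediate once checked at the boundary point $(x,d)=(3,2)$. The near-tightness of the bound ($h(3,2)\approx 0.998$) is the reason the claim is phrased as a strict inequality rather than with a cleaner explicit constant.
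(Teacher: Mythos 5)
Your proof is correct and follows exactly the route the paper sketches (and then omits): compute the partial derivatives, observe both are negative throughout $[3,\infty)\times[2,\infty)$ since $\log 2 + 3\log 3 > 3$, conclude the supremum is attained at the corner $(3,2)$, and verify numerically that $\frac{2+\log 54}{6}<1$ via $e^4>54$. The paper merely states that the claim follows from computing the partial derivatives and showing monotone decrease, so you have supplied the details it leaves out.
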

			The claim is a straightforward result derived by computing the 
			partial derivatives of the left hand side and showing it is 
			monotonically decreasing for any $ x $ and $ d $ in its domain, and 
			therefore its proof is omitted.
			
			Begin with assuming $ c\le1 $. Then substituting $ 
			n=8dv_{g,2}^3/\epsilon^3 $ in \eqref{eq:barron}, we get
			\begin{align*}
				\sup_{\bx\in D} \abs{g(\bx)-g_n(\bx)} &\le 
				cv_{g,2}\frac{\sqrt{d+\log n}}{n^{1/2+1/d}} \\
				&\le v_{g,2}\sqrt{\frac{d+\log n}{n}} \\
				&= 
				\frac{\epsilon}{2}\sqrt{\frac{d+\log(8dv_{g,2}^3/\epsilon^3)}{2dv_{g,2}/\epsilon}}
				 \\
				&< \frac{\epsilon}{2},
			\end{align*}
			where the last inequality is due to Claim \ref{clm:sup_ineq} and 
			the assumption that $ v_{g,2}/\epsilon \ge 1.5 $, which will always 
			hold for small enough $ \epsilon>0 $ since $ v_{g,2} $ is always 
			finite and positive for a non-constant $ 
			f=g\star\gamma_{\epsilon^2/4d} $.
			
			For the second case, assume $ c>1 $. Then choosing $ 
			n=8dc^3v_{g,2}^3/\epsilon^3 $ we similarly have
			\begin{align*}
				\sup_{\bx\in[-1,1]^d} \abs{g(\bx)-g_n(\bx)}
				&\le cv_{g,2}\sqrt{\frac{d+\log n}{n}} \\
				&= 
				\frac{\epsilon}{2}\sqrt{\frac{d+\log(8dc^3v_{g,2}^3/\epsilon^3)}{2dcv_{g,2}/\epsilon}}
				 \\
				&< \frac{\epsilon}{2},
			\end{align*}
			where likewise, the last inequality uses Claim \ref{clm:sup_ineq} 
			and the assumptions that $ v_{g,2}/\epsilon \ge 1.5 $ and $ c>1 $.
			
			We conclude using \eqref{eq:fourier_bound} that $ g $ can be $ \epsilon/2 $-approximated using a 
			depth $2$ ReLU network of width $$ 2 + 
			dv_{g,2}^3\max\{1,c^3\}/\epsilon^3 
			=\exp\p{\Ocal\p{d\log\p{1/\epsilon}}}, $$ completing the proof of 
			\thmref{thm:poly_eps}.

		\subsection{Proof of \thmref{thm:daniely_reduction}}
			Our proof essentially reduces the assumptions in the theorem statement to those of \citet[Example 2]{daniely2017depth}, who showed that any depth $2$ ReLU network which approximates the non-radial function $ 
			\sin\p{\pi d^3\inner{\bx_1,\bx_2}} $ to an expected accuracy of at most $ \frac{1}{50\exp(2)\pi^2} $ with respect to the uniform 
			distribution on $ \mathbb{S}^{d-1}\times\mathbb{S}^{d-1} $, while having weights bounded by $ 2^d $, necessarily has width at least $ 2^{\Omega(d\log d)} $.
			
			Suppose that $ f $ is approximable to accuracy $ \epsilon $ using a 
			depth $2$ network $ N $ of width $ w(d,1/\epsilon) $, having weights bounded by $ \frac{2^{d+1}}{2\pi d^3} $. i.e.\ suppose that
			\[
				\sup_{\bx\in \unitball}\abs{N(\bx) - f(\bx)} \le \epsilon.
			\]
			Then in particular, we can choose $ \epsilon = 
			\frac{1}{101\exp(2)\pi^3d^3} $ to have a width $ w(d, 101\exp(2)\pi^3d^3) $ 
			network satisfying
			\begin{equation}\label{eq:100pi}
				\sup_{\bx\in \unitball}\abs{N(\bx) - f(\bx)} < 
				\frac{1}{100\exp(2)\pi^3d^3}.
			\end{equation}
			Now, let $ \tilde{f}(\bx)=2\pi d^3f(\bx) $, and let $ 
			\tilde{N}(\bx)=2\pi d^3N(\bx) $, which is also a depth $2$ neural 
			network with weights bounded by $ 2^{d+1} $, since the scaling 
			factor of $ 2\pi d^3 $ can be simulated by multiplying the weights 
			of the output neuron of $ N $ by $ 2\pi d^3 $. We have using 
			\eqref{eq:100pi} that 
			\[
				\sup_{\bx\in \unitball}\abs{\tilde{N}(\bx) - \tilde{f}(\bx)} = 
				\sup_{\bx\in \unitball}\abs{2\pi d^3N(\bx) - 2\pi d^3f(\bx)} = 
				2\pi d^3\cdot\sup_{\bx\in \unitball}\abs{N(\bx) - f(\bx)} < 
				\frac{1}{50\exp(2)\pi^2},
			\]
			By taking a network $ N^{\prime} $ which is identical to $ 
			\tilde{N} $ except for having its first layer weights (excluding 
			bias terms) halved, i.e.\ bounded by $ 2^d $, we have
			\[
				\sup_{\bx\in 2\unitball}\abs{N^{\prime}(\bx) - 
				\tilde{f}\p{\frac{1}{2}\bx}} < \frac{1}{50\exp(2)\pi^2}.
			\]
			which implies that
			\[
				\sup_{\bx_1,\bx_2\in \unitball}\abs{N^{\prime}(\bx_1+\bx_2) - 
				\tilde{f}\p{\frac{1}{2}(\bx_1+\bx_2)}} < \frac{1}{50\exp(2)\pi^2},
			\]
			as well as
			\begin{equation}\label{eq:supsum}
				\sup_{\bx_1,\bx_2\in 
				\mathbb{S}^{d-1}}\abs{N^{\prime}(\bx_1+\bx_2) - 
				\tilde{f}\p{\frac{1}{2}(\bx_1+\bx_2)}} < \frac{1}{50\exp(2)\pi^2},
			\end{equation}
			since further restricting the domain cannot increase the supremum. 
			Now, observe that
			\begin{align}
				\tilde{f}\p{\frac{1}{2}(\bx_1+\bx_2)} &= \sin\p{\frac{1}{2}\pi 
				d^3\norm{\bx_1+\bx_2}^2} \nonumber\\
				&= \sin\p{\frac{1}{2}\pi 
				d^3\p{\norm{\bx_1}^2+\norm{\bx_2}^2+2\inner{\bx_1,\bx_2}}} 
				\nonumber\\
				&= \sin\p{\pi d^3 + \pi d^3\inner{\bx_1,\bx_2}} \nonumber\\
				&= (-1)^d\sin\p{\pi d^3\inner{\bx_1,\bx_2}}. \label{eq:daniely_func}
			\end{align}
			Note that approximating a function $ f $ and approximating its additive inverse $ -f $ using a neural network is equivalent (simply invert the weights of the output neuron), thus we can w.l.o.g.\ ignore the $ (-1)^d $ term in the above. Plugging \eqref{eq:daniely_func} in \eqref{eq:supsum} we obtain
			\begin{equation}\label{eq:summing_network}
				\sup_{\bx_1,\bx_2\in 
				\mathbb{S}^{d-1}}\abs{N^{\prime}(\bx_1+\bx_2) - \sin\p{\pi 
				d^3\inner{\bx_1,\bx_2}}} < \frac{1}{50\exp(2)\pi^2}
			\end{equation}
			Finally, let $ N^{\prime\prime}:\reals^{2d}\to\reals $ be the 
			network obtained from $ N^{\prime} $ by duplicating its first layer 
			weights excluding biases, i.e.\ $ \bw_i\mapsto(\bw_i,\bw_i) $, thus 
			we have that $ N^{\prime\prime}((\bx_1,\bx_2)) = 
			N^{\prime}(\bx_1+\bx_2) $ for any $ \bx_1,\bx_2\in\mathbb{S}^{d-1} 
			$. Plugging this in \eqref{eq:summing_network} we obtain
			\begin{equation}\label{eq:daniely_contradiction}
				\sup_{\bx_1,\bx_2\in 
				\mathbb{S}^{d-1}}\abs{N^{\prime\prime}((\bx_1,\bx_2)) - 
				\sin\p{\pi d^3\inner{\bx_1,\bx_2}}} < \frac{1}{50\exp(2)\pi^2}.
			\end{equation}
			That is, \eqref{eq:daniely_contradiction} establishes the existence 
			of a width $ w(d,101\exp(2)\pi^3d^3) $, depth $2$ ReLU network having 
			weights bounded by $ 2^d $, which uniformly approximates $ 
			\sin\p{\pi d^3\inner{\bx_1,\bx_2}} $ on $ 
			\mathbb{S}^{d-1}\times\mathbb{S}^{d-1} $ (and in particular, 
			provides such expected accuracy with respect to the uniform 
			distribution on $ \mathbb{S}^{d-1}\times\mathbb{S}^{d-1} $). By 
			\citet[Example 2]{daniely2017depth}, this implies that $ 
			w(d,101\exp(2)\pi^3d^3) \ge 2^{\Omega(d\log d)} $, concluding the proof 
			of \thmref{thm:daniely_reduction}

		\subsection{Proof of \thmref{thm:unit_ball_reduction}}
			In this proof, we utilize the measure on $ \reals^d $ 
			used in \citet{eldan2016power} for their lower bound, whose density is given by the square of
			\[
				\varphi(\bx) = \p{\frac{R_d}{\norm{\bx}_2}}^{d/2}J_{d/2}(2\pi 
				R_d\norm{\bx}_2)~,
			\]
			where $ R_d=\frac{1}{\sqrt{\pi}}\p{\Gamma\p{\frac{d}{2}+1}}^{1/d} 
			$, and $ J_{\nu}(z) $ is the Bessel function of the first kind, of 
			order $ \nu $ (see reference above for further information about 
			these functions). 
			
			Suppose we have $ N $ as in the theorem statement. Define $ 
			h_r(z)=\mathbbm{1}\set{z\le r} $ and
			\begin{equation*}
				f_{r,\delta}(z) = \begin{cases}
					1 & z\in(-\infty,r) \\
					-\frac{1}{\delta}z+\frac{r}{\delta}+1  & z\in[r,r+\delta] \\
					0 & z\in(r +\delta,\infty)
				\end{cases},
			\end{equation*}
			for some parameter $r$. That is, $ 
			f_{r,\delta} $ is a $ \frac{1}{\delta} $-Lipschitz approximation of 
			the indicator $ z\mapsto\mathbbm{1}\set{z\le r} $. To establish 
			\thmref{thm:unit_ball_reduction}, we shall fix $C=\frac{\epsilon^2\sqrt{d}}{5.2} $ and consider the measure $\mu$ with density $\gamma'(\bx)=(\beta\alpha)^d\varphi^2(\beta\alpha\bx)$ used in \citet[Thm.~1]{safran2017depth}, for some $\beta\in[1,2]$ and where $ \alpha\ge1 $ is 
			the universal constant from \citet{eldan2016power}. We will show that
			\begin{equation}\label{eq:unit_ball_approx}
				\norm{f_{r,C}(\norm{\cdot}_2)-h_r(\norm{\cdot}_2)}_{L_2(\mu)}\le\epsilon/2,
			\end{equation}
			and that there exists a depth $2$ network $ N^{\prime} $ which is based on $ N $, having width $ 
			2\cdot w\p{d,c\epsilon^{-3}} $ for some universal $ c>0 $, and satisfying
			\begin{equation}\label{eq:network_approx}
				\norm{N^{\prime}-f_{r,C}(\norm{\cdot}_2)}_{L_2(\mu)}\le\epsilon/2.
			\end{equation}
			This would imply \thmref{thm:unit_ball_reduction}, since Equations 
			(\ref{eq:unit_ball_approx}) and (\ref{eq:network_approx}) yield
			\[
				\norm{N^{\prime}-h_r(\norm{\cdot}_2)}_{L_2(\mu)} \le 
				\norm{N^{\prime} - f_{r,C}(\norm{\cdot}_2)}_{L_2(\mu)} + 
				\norm{f_{r,C}(\norm{\cdot}_2) - h_r(\norm{\cdot}_2)}_{L_2(\mu)} 
				\le \frac{\epsilon}{2} + \frac{\epsilon}{2} = \epsilon,
			\]
			and by plugging $ \epsilon=c_0/d^2 $ for some universal constant $ 
			c_0>0 $, we have from \citet[Thm.~1, Eq.~(4)]{safran2017depth} that 
			for any such depth $2$ neural network approximation of a ball 
			indicator of radius $r=\sqrt{d}$ w.r.t.\ the measure $\mu$ with density $\gamma'$ satisfying
			\[
				\norm{N^{\prime}-h_r}_{L_2(\mu)} < \frac{c_0}{d^2},
			\]
			it must hold that the width of $ N $ satisfies $ w(d,c_2d^6)\ge 
			c_3\exp(c_4d) $ for any $ d>c_1 $, some $ c_2>0 $ and small enough 
			$ c_3,c_4>0 $.
			
			We begin by proving \eqref{eq:unit_ball_approx}. Following a 
			similar approach as in \citet[Lemma 7]{eldan2016power}. We have by 
			definition that 
			\[
				\norm{f_{r,C}(\norm{\cdot}_2)-h_r(\norm{\cdot}_2)}_{L_2(\mu)}^2=\int_{\reals^d}\p{f_{r,C}(\norm{\bx})
				 - h_r(\norm{\bx})}^2(\beta\alpha)^d\varphi^2(\beta\alpha\bx)d\bx.
			\]
			Changing to polar coordinates, where $ 
			A_d=\frac{d\pi^{d/2}}{\Gamma(\frac{d}{2}+1)} $ denotes the volume of the unit hypersphere in $ \reals^d $, the above equals
			\[
				\int_{0}^{\infty}A_dz^{d-1}(f_{r,C}(z)-h_r(z))^2(\beta\alpha)^d\varphi^2(\beta\alpha z)dz = 
				\int_{0}^{\infty}A_d\frac{R_d^d}{z}(f_{r,C}(z)-h_r(z))^2J_{d/2}^2(2\pi\beta\alpha R_dz)dz.
			\]
			Using the definition of $ R_d $, this equals
			\[
				\int_{0}^{\infty}\frac{d}{z}(f_{r,C}(z)-h_r(z))^2J_{d/2}^2(2\pi R_dz)dz \le \int_{\sqrt{d}}^{\sqrt{d}+C}\frac{d}{z}J_{d/2}^2(2\pi\beta\alpha R_dz)dz,
			\]
			where the inequality is due to the definitions of $ f_{r,C},h_r $, since both functions are identical on $ 
			\pco{0,\infty} $ except for the interval $ \pcc{\sqrt{d},\sqrt{d}+C} $, where they 
			deviate from each other by at most $ 1 $. Moreover, for such $ z $ in the integration interval we have that Lemma 14 from \citet{eldan2016power} applies; therefore, the 
			integral is upper bounded by
			\[
				\int_{\sqrt{d}}^{\sqrt{d}+C}\frac{d}{z}\cdot\frac{1.3}{z\sqrt{d}}dz \le \frac{1.3C}{\sqrt{d}} = \frac{\epsilon^2}{4},
			\]
			where \eqref{eq:unit_ball_approx} follows by taking the square root.
			
			Moving to \eqref{eq:network_approx}, suppose we have $ N $ as in 
			the theorem statement, approximating $ f $ to accuracy $ 
			\frac{\epsilon}{4(C^{-1}r+1)}= \Theta(\epsilon^3) $. Namely, we have a depth $2$ network of width $ w\p{d,c\epsilon^{-3}} $ such that
			\[
				\int_{\bx\in\reals^d}\p{N(\bx)-f(\bx)}^2\gamma(\bx)d\bx \le \frac{\epsilon^2}{16\p{C^{-1}r+1}^2},
			\]
			for some constant $ c>0 $ and some density $ \gamma $. Let $ A $ be some invertible matrix to be determined later, and consider the change of variables $ \by=A\bx \iff \bx = A^{-1}\by $, $ d\bx=\abs{\det\p{A^{-1}}}\cdot d\by $, which yields
			\begin{equation}\label{eq:gamma_measure}
				\int_{\by\in\reals^d}\p{N\p{A^{-1}\by}-f\p{A^{-1}\by}}^2\cdot\gamma(A^{-1}\by)\abs{\det\p{A^{-1}}}\cdot d\by \le \frac{\epsilon^2}{16\p{C^{-1}r+1}^2}.
			\end{equation}
			In particular, we may choose $ \gamma(\bz)=\abs{\det\p{A}}\cdot\gamma'(A\bz) $ (note that this indeed defines a measure as readily seen by the change of variables $ \bx = A\bz $, $ d\bx=\abs{\det\p{A}}d\bz $, yielding $ \int_{\bz}\gamma(\bz)d\bz=\int_{\bx}\gamma'(\bx)d\bx=1 $). Plugging the chosen $ \gamma $ in \eqref{eq:gamma_measure} we obtain
			\begin{equation}\label{eq:gamma_approx}
				\int_{\by\in\reals^d}\p{N\p{A^{-1}\by}-f\p{A^{-1}\by}}^2\cdot\gamma'(\by) d\by \le \frac{\epsilon^2}{16\p{C^{-1}r+1}^2}.
			\end{equation}
			Observing that for any $ A $, $ N(A^{-1}\by) $ expresses a linear transformation of the input which can be simulated by an appropriate modification of the weights in the hidden layer of $ N $, we choose $ A=(r+C)\cdot I_d $ and $ A=r\cdot I_d $, where $ I_d $ is the $ d\times d $ identity matrix, to obtain
			\begin{equation}\label{eq:cr_approx}
				\int_{\bx\in\reals^d} \p{N\p{\frac{\bx}{r+C}}-f\p{\frac{\bx}{r+C}}}^2\gamma'(\bx) d\bx \le \frac{\epsilon^2}{16\p{C^{-1}r+1}^2},
			\end{equation}
			and
			\begin{equation}\label{eq:r_approx}
				\int_{\bx\in\reals^d} \p{N\p{\frac{\bx}{r}}-f\p{\frac{\bx}{r}}}^2\gamma'(\bx) d\bx \le \frac{\epsilon^2}{16\p{C^{-1}r+1}^2}.
			\end{equation}
			Now, consider the network $ N^{\prime} $ given by
			\[
				N^{\prime}(\bx) = (C^{-1} r+1)N\p{\frac{\bx}{r+C}}-C^{-1} 
				rN\p{\frac{\bx}{r}}.
			\]
			Note that this is indeed a depth $2$ network of width $ 2\cdot w\p{d,c\epsilon^{-3}} $ as a linear combination of depth $2$ networks.
			We will show that this network approximates
			\[
				f_{r,C}(\norm{\bx}) = (C^{-1} 
				r+1)f\p{\frac{\bx}{r+C}}-C^{-1} rf\p{\frac{\bx}{r}}.
			\]
			Compute taking the square roots of Equations (\ref{eq:cr_approx}) and (\ref{eq:r_approx}) to obtain
			\begin{align*}
				& \norm{N^{\prime}-f_{r,C}(\norm{\cdot}_2)}_{L_2(\mu)} \\ =& 
				\norm{(C^{-1} r+1)N\p{\frac{\bx}{r+C}}-C^{-1} 
				rN\p{\frac{\bx}{r}} - (C^{-1} r+1)f\p{\frac{\bx}{r+C}} + 
				C^{-1} rf\p{\frac{\bx}{r}}}_{L_2(\mu)} \\
				=& \norm{(C^{-1} r+1)\p{N\p{\frac{\bx}{r+C}} - 
				f\p{\frac{\bx}{r+C}}} - C^{-1} r \p{N\p{\frac{\bx}{r}} - 
				f\p{\frac{\bx}{r}}}}_{L_2(\mu)} \\
				\le& (C^{-1} r+1)\norm{N\p{\frac{\bx}{r+C}} - 
				f\p{\frac{\bx}{r+C}}}_{L_2(\mu)} + C^{-1} r \norm{N\p{\frac{\bx}{r}} - 
				f\p{\frac{\bx}{r}}}_{L_2(\mu)} \\
				\le& (C^{-1} r+1)\frac{\epsilon}{4(C^{-1}r+1)} + C^{-1} r 
				\frac{\epsilon}{4(C^{-1}r+1)} \le \frac{\epsilon}{4} + 
				\frac{\epsilon}{4} = \frac{\epsilon}{2},
			\end{align*}
			implying \eqref{eq:network_approx}, and concluding the proof of \thmref{thm:unit_ball_reduction}.
	
	\subsection*{Acknowledgements}
	This research is supported in part by European Research Council (ERC) Grant 754705.
	
	\bibliographystyle{abbrvnat}
	\bibliography{citations}

	\appendix
	
	\section{Trading-Off $L,\epsilon$ and Radius of 
	Support}\label{app:reductions}
	
	In this appendix, we formally show that given an inapproximability result 
	for neural networks, using an $L$-Lipschitz function, w.r.t.\ to some 
	distribution with support of radius $r$ and accuracy $\epsilon$, it is easy 
	to get an inapproximability result even for $1$-Lipschitz functions, at the 
	cost of scaling either $\epsilon$ or $r$ polynomially in 
	$L$:
	
	\begin{theorem}
	Let $f$ be an $L$-Lipschitz function on $\reals^d$, and $\mu$ a measure 
	over 
	$\reals^d$ with support bounded in $\{\bx:\norm{\bx}\leq r\}$ for some 
	$r\leq \infty$. Suppose that
	\[
	\inf_{n\in\Ncal} 
	\E_{\bx\sim\mu}\left[\left(n(\bx)-f(\bx)\right)^2\right]~\geq~ \epsilon~,
	\]
	where $\Ncal$ is some class of functions closed under scaling (namely, if 
	$n\in\Ncal$, then $\bx\mapsto a\cdot n(b\bx)$ for any $a,b>0$ is also in 
	$\Ncal$). 
	\begin{enumerate}
	\item Define the 
	$1$-Lipschitz function $\tilde{f}(\bx):=\frac{1}{L}f(\bx)$. 
	Then it holds that
	\[
	\inf_{n\in\Ncal} 
	\E_{\bx\sim\mu}\left[\left(n(\bx)-\tilde{f}(\bx)\right)^2\right]~\geq~ 
	\frac{\epsilon}{L^2}~.
	\]
	\item Define the $1$-Lipschitz function 
	$\hat{f}(\bx):=f\left(\frac{1}{L}\bx\right)$, and 
	the measure $\hat{\mu}$ by $\hat{\mu}(A) := \mu\left(\frac{1}{L}A\right)$ 
	for any set $A$ in the $\sigma$-algebra of $\mu$ (where 
	$\frac{1}{L}A:=\{\frac{1}{L}\bx:\bx\in A\}$ and assuming this set is also 
	in 
	the 
	$\sigma$-algebra 
	of $\mu$). Then $\hat{\mu}$ has a support bounded in $\{\bx:\norm{\bx}\leq 
	rL\}$, and
	\[
	\inf_{n\in\Ncal} 
	\E_{\bx\sim\hat{\mu}}\left[\left(n(\bx)-\hat{f}(\bx)\right)^2\right]~\geq~ 
	\epsilon~.
	\]
	\end{enumerate}
	\end{theorem}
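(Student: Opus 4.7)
The plan is to handle each part as a direct scaling/change-of-variables reduction, using closure of $\Ncal$ under scaling to transport candidate approximators between the original and modified problems.

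For part 1, I would first verify that $\tilde{f}$ is $1$-Lipschitz, which is immediate from dividing $f$ by $L$. Then, given any candidate $n\in\Ncal$ approximating $\tilde{f}$, I would form $n'(\bx) := L\cdot n(\bx)$, which lies in $\Ncal$ by closure under scaling (take $a=L$, $b=1$). Factoring $L$ out of the square,
\[
\E_{\bx\sim\mu}\pcc{(n'(\bx)-f(\bx))^2} \;=\; L^2\,\E_{\bx\sim\mu}\pcc{(n(\bx)-\tilde{f}(\bx))^2},
\]
and applying the assumed lower bound $\epsilon$ on the left-hand side yields the desired $\epsilon/L^2$ on the right-hand side. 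Taking the infimum over $n\in\Ncal$ then produces the stated inapproximability.

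For part 2, the Lipschitz check is again direct: $|\hat{f}(\bx)-\hat{f}(\by)| = |f(\bx/L)-f(\by/L)| \le L\cdot\norm{(\bx-\by)/L} = \norm{\bx-\by}$. Since $\hat{\mu}$ is the pushforward of $\mu$ under the dilation $\by\mapsto L\by$, the support of $\hat{\mu}$ is contained in $\{\bx:\norm{\bx}\le rL\}$. For the lower bound, given any candidate $n\in\Ncal$ approximating $\hat{f}$ under $\hat{\mu}$, I would set $n'(\by) := n(L\by)$, which lies in $\Ncal$ by closure under scaling (take $a=1$, $b=L$). The change of variables $\bx = L\by$ together with the definition of $\hat{\mu}$ and $\hat{f}$ gives
\[
\E_{\bx\sim\hat{\mu}}\pcc{(n(\bx)-\hat{f}(\bx))^2} \;=\; \E_{\by\sim\mu}\pcc{(n(L\by)-f(\by))^2} \;=\; \E_{\by\sim\mu}\pcc{(n'(\by)-f(\by))^2} \;\ge\; \epsilon,
\]
and again the infimum over $n$ yields the claim.

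No single step here is a serious obstacle; the entire argument is scaling bookkeeping. The only point requiring minor care is ensuring that both the prefactor scaling used in part 1 and the input dilation used in part 2 fall under the assumed closure of $\Ncal$ (which the theorem's closure hypothesis is stated to cover), and that the pushforward/change-of-variables identity for $\hat{\mu}$ is applied correctly — in particular, that the definition $\hat{\mu}(A) = \mu(A/L)$ is indeed equivalent to $\hat{\mu}$ being the law of $L\by$ when $\by\sim\mu$, so that $\E_{\bx\sim\hat{\mu}}[g(\bx)] = \E_{\by\sim\mu}[g(L\by)]$ for any measurable $g$.
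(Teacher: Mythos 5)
Your proposal is correct and takes essentially the same approach as the paper: in both parts you transport a candidate approximator from the modified problem to the original one via the closure of $\Ncal$ under scaling (output scaling by $L$ for part 1, input dilation by $L$ for part 2), and appeal to the assumed $\epsilon$ lower bound. The paper's proof is the same scaling/change-of-variables argument, just phrased in terms of directly rewriting the infimum rather than explicitly constructing $n'$ from $n$.
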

	
	\begin{proof}
	By the assumptions, we have $\inf_{n\in\Ncal} 
		\E_{\bx\sim\mu}\left[\left(\frac{1}{L}n(\bx)-\frac{1}{L}f(\bx)\right)^2\right]~\geq~
		 \frac{\epsilon}{L^2}$, so the first part follows from definition of 
		 $\tilde{f}$ and the fact that $\Ncal$ is closed under scaling. As to 
		 the second part, the assertion on the support of $\hat{\mu}$ is 
		 immediate, and we have
		 \[
		 \inf_{n\in\Ncal} 
		 	\E_{\bx\sim\hat{\mu}}\left[\left(n(\bx)-\hat{f}(\bx)\right)^2\right]
		 ~=~
		 \inf_{n\in\Ncal} 
		 	\E_{\bx\sim \mu}\left[\left(n(L\bx)-\hat{f}(L\bx)\right)^2\right]
		 ~=~
 		 \inf_{n\in\Ncal} 
 		 	\E_{\bx\sim \mu}\left[\left(n(L\bx)-f(\bx)\right)^2\right],
		 \] 
		 which is at least $\epsilon$ by our assumptions and the fact that 
		 $\Ncal$ is closed under scaling.
	\end{proof}

\end{document}